\pdfoutput=1
\documentclass{article}

\usepackage[main, final]{neurips_2026}

\usepackage[utf8]{inputenc}
\usepackage[T1]{fontenc}
\usepackage{hyperref}
\usepackage{url}
\usepackage{booktabs}
\usepackage{amsmath,amssymb,amsfonts,amsthm,mathtools,tabularx}
\usepackage{nicefrac}
\usepackage{microtype}
\usepackage{xcolor}
\usepackage{placeins}
\theoremstyle{plain}
\newtheorem{theorem}{Theorem}
\newtheorem{lemma}[theorem]{Lemma}
\newtheorem{proposition}[theorem]{Proposition}
\newtheorem{corollary}[theorem]{Corollary}
\newcommand{\method}{\textsc{Pack3D}}

\usepackage{array}

\newcolumntype{L}[1]{>{\raggedright\arraybackslash}p{#1}}

\newcolumntype{Y}{>{\raggedright\arraybackslash}X}

\title{Geometry Matters in Packed 3D Attention}

\author{%
  Amil Khan \qquad Anantajit Subrahmanya \qquad B.S. Manjunath\\
  Department of Electrical \& Computer Engineering\\
  University of California, Santa Barbara\\
  \texttt{\{amil, anantajit, manj\}@ucsb.edu}
}

\begin{document}

\maketitle

\begin{abstract}

Packing a 3D volume into a token sequence is not merely long-sequence modeling. Once the volume is packed, sequence order no longer encodes the geometry of the sample — so a positional mechanism indexed by sequence position can learn the memory layout rather than the object. This is a general failure mode for attention over packed geometric data.
We introduce PACK3D, a local–global attention operator that separates memory layout from attention geometry. PACK3D keeps local neighborhood attention exact, compresses only distant context into contiguous 3D block summaries, and indexes every query–key interaction with axial RoPE on true token-grid coordinates rather than packed positions. Because the rotation is applied before a block is summarized, the query-to-summary logit remains a weighted sum of query-relative 3D displacements; compressed global context therefore stays spatially interpretable after packing, instead of collapsing to anonymous pooled content.
We measure the failure mode directly. On a controlled synthetic task---train on one packing, change only the memory order of the tokens, hold the physical labeling rule fixed---learned positional embeddings and packed-sequence RoPE solve the training layout perfectly yet collapse under Morton, block-contiguous, and random repacking, while axial grid-coordinate RoPE stays exact throughout. The same coordinate system carries to real data: on Allen WTC–11 hiPSC microscopy, PACK3D improves over a parameter-matched 3D U-Net on the 15-structure joint task and over a matched learned-position local–global baseline, with controls isolating the coordinate system, not encoder capacity, as the source of the gain. A fused Triton implementation matches windowed-attention latency at 65k tokens while supplying the long-range context windowing cannot---so the operator is not only correct but trainable at scale.
\end{abstract}

\section{Introduction}
\label{sec:intro}

High-token 3D fields arise across volumetric domains: live-cell microscopy
\citep{vianaIntegratedIntracellularOrganization2023}, medical imaging
\citep{fangLargeModelsMedical2025}, learned image compression for microscopy
\citep{zhouDeeplearningbasedImageCompression2024}, and spectral-spatial sensing
cubes such as hyperspectral methane detection
\citep{kumarMethaneMapperSpectralAbsorption2023}. In each, a volume is patchified
and linearized into a token sequence for processing. But linearization discards
something physical: once a 3D field is flattened, physical geometry no longer
coincides with token order in memory. Tokens may be packed in raster, Morton, or
block-contiguous order for systems efficiency, yet those layout choices are not
coordinate systems. When a positional mechanism reads the layout as if it were
geometry, attention learns the memory order rather than the object. This is the
packed-geometric attention problem.

Dense and window attention fail in opposite ways. Dense attention preserves
global access but scales quadratically with token count; window attention scales
but sees only locally. Local--global attention is the natural compromise --- exact
nearby, compressed far away --- and it is exactly where the problem bites, because
the compression step inherits whatever coordinate the positional mechanism uses.
If that coordinate is sequence position, a packed operator can reduce the
attention graph correctly and still attend over the wrong geometry. Reducing the
graph and fixing the geometry are independent requirements, and a method can
satisfy the first while failing the second.

Three-dimensional microscopy is where this is unavoidable, and it is our
claim-grade domain. A single live-cell crop holds tens to hundreds of thousands
of tokens, while the signal spans puncta, membranes, cytoplasmic networks,
nuclear geometry, and cell-scale organization. The Allen WTC--11 hiPSC
Single-Cell Image Dataset v1 contains more than 200{,}000 live cells in 3D across
25 tagged structures, and its analysis framework is built on cell shape,
intracellular coordinates, average structure locations, variability, and
pairwise structure interactions \citep{vianaIntegratedIntracellularOrganization2023}.
When the scientific objects are themselves spatial, a model that confuses storage
order with location is not merely inefficient; it is answering the wrong question.

We introduce \method{}, a geometry-aware local--global attention operator for
packed high-token 3D fields. \method{} keeps local neighborhood attention exact,
compresses only distant context into contiguous 3D block summaries, and indexes
every query--key interaction with axial RoPE on true token-grid coordinates
rather than packed positions. The mechanism that makes this more than a renaming
of position is that compression preserves query-relative geometry: for a block
\(B\) with normalized weights \(a_{B,u}\) and coordinate-aware compressed key
\(\widetilde{k}_B^R\),
\[
    (q_i^R)^\top \widetilde{k}_B^R
    = \sum_{u\in B} a_{B,u}\, q_i^\top R(c_u-c_i)\,k_u .
\]
A query therefore does not attend to an anonymous pooled block; it attends to
block content through the relative 3D displacement of the tokens that formed the
summary. Rotating before pooling is what keeps the summary spatial.

The systems contract is part of the method, not an afterthought to it. A sparse
pattern that materializes dense route tensors or gathered local-neighbor tensors
is sparse only on paper, not as a trainable high-token operator. \method{} is a
fused forward/backward block with Triton local attention, contiguous block
summaries, and explicit backend reporting, following the lesson of natively
trainable sparse attention: useful sparsity must align with accelerator memory
movement and the backward pass, not only with asymptotic complexity
\citep{yuanNativeSparseAttention2025b}.

We are deliberate about what we do and do not claim. The Allen task predicts
Allen-generated masks from target fluorescence, brightfield, and DNA channels;
it is neither label-free prediction nor manual-label segmentation. We claim
neither segmentation state of the art, nor universal superiority over U-Nets, nor
a general speed law for sparse attention. The finding is architectural and
narrower than any of these: for packed 3D fields, local--global attention over
compressed summaries is reliable only when positional geometry is tied to
physical coordinates rather than to memory order.

Our contributions are:
\begin{enumerate}
    \item \textbf{A design principle and an operator.} We formulate \method{} and
    elevate the separation of memory layout from attention geometry into a design
    principle for packed geometric attention: packing is a systems decision,
    coordinates are a modeling decision, and the two must not be conflated.

    \item \textbf{The geometry of compressed attention.} We show that applying
    RoPE before block compression makes query-to-summary logits preserve
    query-relative 3D displacement, so compressed summaries remain spatially
    interpretable rather than merely pooled. A summary-center variant is a
    competitive approximation in short ablations, so the supported principle is
    coordinate-aware compression, not the necessity of one specific placement.

    \item \textbf{A hardware-aligned realization.} We implement the operator as a
    fused forward/backward training block: Triton local attention avoids
    local-neighbor materialization, contiguous summaries preserve locality, and
    the sparse graph is trainable without materializing dense route tensors on the
    main path.

    \item \textbf{Evidence that the coordinate system, not capacity, drives the
    gain.} On a controlled synthetic repacking benchmark, learned and
    packed-sequence positional encodings solve the training layout yet collapse
    under Morton, block-contiguous, and random repacking, while axial
    grid-coordinate RoPE stays exact. On Allen WTC--11 microscopy, \method{}
    improves over a \emph{parameter-matched} TokenGrid 3D U-Net on the
    15-structure joint task and over a matched learned-position local--global
    baseline; capacity-controlled ablations attribute the gain to the coordinate
    system rather than to position-encoding parameters.
\end{enumerate}

The operator is reusable beyond the domain we test. Any task whose tokens carry
physical coordinates --- volumetric medical segmentation, density-field and NeRF-style
representations, spatial transcriptomics --- inherits the same packing invariant: the
attention should be built in the coordinates of the sample, not in the sequence used
to store it. What we do not study here is equally explicit: label-free prediction,
U-Net replacement, and transfer across imaging modalities are out of scope and left
to future work.

\section{Related Work}
\label{sec:related}

\label{sec:related-work}

\paragraph{Attention for 3D vision.}
Attention-based vision models are attractive because they can couple local evidence with long-range
context without fixing the interaction range in advance. Vision Transformers and subsequent scaling
and self-supervised results showed that attention-based visual backbones can learn strong
representations when given sufficient data and compute~\citep{
dosovitskiyImageWorth16x162021,
zhaiScalingVisionTransformers2022,
dehghaniScalingVisionTransformers2023,
heMaskedAutoencodersAre2022}. For dense 3D prediction, this flexibility is useful because local
structure often depends on larger spatial organization. Our question is not whether attention should
replace convolution everywhere, but what geometry local--global attention should use once a 3D
field is patchified, packed, and processed as tokens.

\paragraph{Efficient and sparse attention.}
A large literature reduces the cost of attention either by changing the implementation of dense
attention or by changing the attention graph. FlashAttention preserves exact scaled dot-product
attention while improving memory movement through IO-aware tiling~\citep{
daoFlashAttentionFastMemoryEfficient2022}. Low-rank and kernelized methods such as Linformer,
Performer, and Nystr\"omformer reduce sequence cost through projections, random features, or
landmark approximations~\citep{
wangLinformerSelfAttentionLinear2020a,
choromanskiRethinkingAttentionPerformers2020,
xiongNystromformerNystombasedAlgorithm2021}. Patterned sparse methods such as Longformer
and BigBird replace dense all-pairs attention with structured local and sparse long-range
connections~\citep{
beltagyLongformerLongDocumentTransformer2020a,
zaheerBigBirdTransformers2021}. Native Sparse Attention is closest in systems motivation: it
emphasizes that sparse attention should be trainable and hardware-aligned rather than merely sparse
on paper~\citep{yuanNativeSparseAttention2025b}.

Our setting adds a geometric constraint. In packed 3D fields, the difficulty is not only reducing the
number of query--key pairs; the positional mechanism must also avoid treating memory order as
sample geometry. \method{} keeps exact local attention, compresses nonlocal context into
block-contiguous summaries, and indexes query--key compatibility by physical 3D token-grid
coordinates rather than packed sequence position.

\paragraph{Position encodings and geometry under packing.}
Transformers require positional information because self-attention alone is permutation
equivariant~\citep{vaswaniAttentionAllYou2017}. Absolute learned or sinusoidal embeddings attach
position to token features, while relative methods modify query--key compatibility as a function of
displacement~\citep{vaswaniAttentionAllYou2017,shawSelfAttentionRelativePosition2018}. RoPE
encodes relative position through query/key rotations~\citep{suRoFormerEnhancedTransformer2023}.
Coordinate-based representations also appear in spatial models such as NeRF and Fourier-feature
networks, where explicit coordinates help represent geometry directly~\citep{
mildenhallNeRFRepresentingScenes2020}.

Packed 3D attention differs from ordinary sequence modeling because sequence position is not the
geometry of the field. Raster, Morton, and block-contiguous orderings are useful layouts, not physical
coordinate systems. The key distinction in \method{} is that neighborhoods, block summaries, and
attention logits are defined on true 3D token-grid coordinates. This places the work closest to
coordinate-aware relative-position methods, but with a specific focus on how compressed
local--global attention preserves spatial meaning after packing.

\paragraph{Volumetric transformers and biomedical segmentation.}
U-Net, 3D U-Net, and V-Net established strong convolutional baselines for dense 3D
prediction~\citep{
ronnebergerUNetConvolutionalNetworks2015,
cicek3DUNetLearning2016a,
milletariVNetFullyConvolutional2016}. More recent biomedical models combine transformer-style
context with U-shaped decoding or convolutional feature extraction, including CoTr, UNETR,
Swin UNETR, and nnFormer~\citep{
xieCoTrEfficientlyBridging2021,
hatamizadehUNETRTransformers3D2022a,
hatamizadehSwinUNETRSwin2022,
zhouNnFormerVolumetricMedical2023}. These architectures show that long-range context can help
3D segmentation, especially when paired with mechanisms that preserve spatial detail.

Our contribution is complementary to this literature. We do not present \method{} as a universal
replacement for U-Nets. Instead, we isolate an operator question: when a high-token 3D field is
packed into a sequence, should local--global attention use packed sequence position or physical
3D coordinates?

\paragraph{Microscopy, Allen cellular organization, and task scope.}
Allen WTC--11 hiPSC microscopy is a natural validation domain because intracellular organization
is explicitly spatial: structure location, morphology, and inter-structure arrangement are central
scientific objects~\citep{vianaIntegratedIntracellularOrganization2023}. Prior Allen pipelines,
including the Allen Cell Structure Segmenter, focus on producing or analyzing structure masks from
fluorescence microscopy~\citep{chenAllenCellStructure2020}. We use target fluorescence,
brightfield, and DNA to predict Allen-generated masks, making Allen a high-token 3D setting for
testing whether packed attention should preserve physical coordinate geometry.

\paragraph{Memory layout and systems realization.}
High-throughput attention depends on how tensors are tiled, streamed, and revisited during forward
and backward passes. FlashAttention and NSA show that wall-clock behavior depends on memory
movement and executable kernels, not only asymptotic sparsity~\citep{
daoFlashAttentionFastMemoryEfficient2022,
yuanNativeSparseAttention2025b}. \method{} follows this systems view while separating layout
from geometry: packing and block-contiguous summaries are used for memory movement, whereas
attention logits are indexed by physical coordinates. In packed 3D attention, memory order is a
systems choice; attention geometry is a modeling choice. They should not be conflated.

\section{Methods}
\label{sec:method}

\method{} is a geometry-aware local--global attention operator for packed
high-token 3D fields. A flattened volume may be stored in raster, Morton, or
block-contiguous order, but those layouts encode memory order, not physical
coordinates. \method{} therefore enforces three contracts: local attention is
exact, because microscopy carries high-frequency morphology that block summaries
would erase; only nonlocal context is compressed, into contiguous block
summaries; and query--key compatibility is parameterized by true 3D coordinates
rather than by packed sequence index. The rest of this section makes each
contract precise and shows that the executed operator realizes it.

\subsection{Packed token fields and coordinate geometry}
\label{subsec:method-packed-fields}

This subsection fixes the distinction the whole method rests on: a token has both
a memory position and a physical coordinate, and \method{} indexes attention by
the latter. After patch embedding, let the token lattice be
\[
    \Omega
    = \{1,\ldots,Z\}\times\{1,\ldots,H\}\times\{1,\ldots,W\},
    \qquad N = |\Omega|,
\]
where \((Z,H,W)\) are token-grid dimensions, not raw voxel dimensions. A packing
order is a bijection \(\pi:\{1,\ldots,N\}\to\Omega\) from sequence positions to
spatial token locations, and we write \(c_i=(z_i,y_i,x_i)\in\mathbb{R}^3\) for the
coordinate attached to token \(i\). In our Allen experiments \(c_i\) is the
token-grid coordinate after preprocessing; where physical voxel spacing matters,
\(c_i\) may be replaced by a scaled coordinate \((\Delta_z z_i,\Delta_y
y_i,\Delta_x x_i)\). The only requirement is that \(c_i\) represent the 3D field,
not the packed sequence index.

The Allen token grid is anisotropic: \(32\times128\times64\). The reported
\method{}-RoPE rows use raw token-grid coordinates with a balanced per-axis
rotary-pair budget; for \(d_h=64\) this gives \(m_z,m_y,m_x = 11,11,10\) with RoPE
base \(10{,}000\). A single-seed sweep at 262k/depth1
(Appendix~Table~\ref{tab:phase8j_coordinate_scaling}) varies this choice across
normalized coordinates, \(z\)-axis rescaling, and \(z\)-light and \(z\)-heavy pair
allocations. Normalized coordinates and \(z\)-heavy allocation both reduce best
Dice, and while \(z\)-light gives a marginally higher best checkpoint than the
balanced default (\(0.7384\) vs.\ \(0.7378\)), its final checkpoint is
substantially worse (\(0.6789\) vs.\ \(0.7041\)). We therefore keep the simpler
raw-grid balanced allocation, and leave a multi-seed study of frequency allocation
to future work.

\method{} uses axial RoPE as its only positional mechanism. For each token,
queries and keys are rotated by a coordinate-indexed orthogonal map \(R(c_i)\),
while values are left unrotated:
\begin{equation*}
\begin{aligned}
    q_i^R &= R(c_i)\,W_Q\operatorname{LN}(h_i), \\
    k_i^{b,R} &= R(c_i)\,W_K^b\operatorname{LN}(h_i), \\
    v_i^b &= W_V^b\operatorname{LN}(h_i),
\end{aligned}
\end{equation*}
for \(b\in\{\mathrm{loc},\mathrm{cmp}\}\). Values carry content, not geometry, so
rotating them would distort aggregation without adding positional information.
Appendix~\ref{app:theory} gives the full axial construction; the property the
method depends on is the relative-displacement identity
\begin{equation}
    \langle R(c_i)q,\, R(c_j)k\rangle = q^\top R(c_j-c_i)\,k .
    \label{eq:main-relative-rope}
\end{equation}
Equation~\eqref{eq:main-relative-rope} is the whole point of the construction: the
logit depends on the relative 3D displacement \(c_j-c_i\), never on the packed
positions \(i\) and \(j\). No other positional signal enters the main operator.

\begin{proposition}[Packing equivariance]
\label{prop:packing-equivariance-main}
Let \(P\) be any repacking of the same token field that preserves the coordinate
attached to each physical token, and suppose the neighborhood relation
\(u\in\mathcal{N}(i)\) and the block partition \(\mathcal{B}\) are defined on the
coordinate lattice \(\Omega\), so both are preserved up to permutation under
repacking. Let \(P_{\mathcal{B}}\) denote the induced permutation on block indices.
Then
\[
    \mathcal{A}(PX;Pc) = P\,\mathcal{A}(X;c).
\]
Sequence-index RoPE is not packing equivariant in general, because its rotary
coordinate changes with memory order.
\end{proposition}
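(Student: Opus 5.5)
The plan is to write the operator \(\mathcal{A}\) out explicitly as a function of the per-token features \(X=(h_i)\) and coordinates \(c=(c_i)\), and then check that every ingredient is either a tokenwise map, a function of coordinates alone, or a sum over a set defined on the lattice \(\Omega\).

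Concretely, the output for query \(i\) combines two branches. The \emph{local branch} is a softmax over \(u\in\mathcal{N}(i)\) with logits \(\langle q_i^R,k_u^{\mathrm{loc},R}\rangle\) and values \(v_u^{\mathrm{loc}}\). The \emph{compressed branch} is a softmax over summary blocks \(B\in\mathcal{B}\) (those not covered locally). Its logits are \(\langle q_i^R,\widetilde{k}_B^R\rangle\), where \(\widetilde{k}_B^R=\sum_{u\in B}a_{B,u}k_u^{\mathrm{cmp},R}\), and its values are \(\widetilde v_B=\sum_{u\in B}a_{B,u}v_u^{\mathrm{cmp}}\). The two branches are then combined by a tokenwise gate or projection. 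I will take this as the definition of \(\mathcal{A}\). I assume the pooling weights \(a_{B,u}\) depend only on the features and coordinates of the tokens in \(B\), and not on their storage index.

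The argument then runs in the following order.

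\begin{enumerate}
\item \emph{Tokenwise maps.} LayerNorm and the projections \(W_Q,W_K^b,W_V^b\) act tokenwise. The rotation \(R(c_i)\) depends only on the coordinate carried by token \(i\). Hence, under \(X\mapsto PX\) and \(c\mapsto Pc\), the rotated queries, keys and values are permuted by the same \(P\).

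\item \emph{Neighborhoods and blocks.} Because \(\mathcal{N}\) and \(\mathcal{B}\) are defined on \(\Omega\), the repacked neighborhood of the relocated token \(i\) is the image under \(P\) of \(\mathcal{N}(i)\). Likewise each block is carried to a block, re-indexed by \(P_{\mathcal{B}}\).

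\item \emph{Summaries.} Sums over \(u\in B\) are invariant under reordering of their terms. Together with the assumption on \(a_{B,u}\), this gives \(\widetilde k_{P_{\mathcal{B}}B}'=\widetilde k_B\), so the summaries are simply permuted by \(P_{\mathcal{B}}\).

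\item \emph{Softmax.} A softmax-weighted sum over a finite set is invariant under bijective relabeling of that set. Both branches for the relocated query therefore produce the same vector as before, and the output appears at the new index, which is \(\mathcal{A}(PX;Pc)=P\mathcal{A}(X;c)\).

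\item \emph{Interpretation via \eqref{eq:main-relative-rope}.} Every logit equals \(q^\top R(c_u-c_i)k\) or the weighted sum of such terms stated in the introduction. These depend only on coordinates carried with the tokens, never on sequence indices. This step is not needed for the proof, but it explains why the result holds.
\end{enumerate}

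For the negative claim, I will exhibit a counterexample. Sequence-index RoPE uses \(R(i)\) in place of \(R(c_i)\), so the logit becomes \(q^\top R(j-i)k\). Take \(N\ge 3\), a single query, and two keys with identical content, and choose \(P\) so that it changes \(j-i\) for one pair but not the other. The attention weights then change nontrivially, for generic nonzero \(q,k\) and nonzero frequencies. Equivariance would require those weights to be unchanged, so it fails.

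I expect the main obstacle to be definitional rather than computational. The equivariance argument breaks if any part of the executed operator refers to sequence index. This includes the summary weights \(a_{B,u}\), the rule that excludes blocks overlapping \(\mathcal{N}(i)\), and any tie-breaking or padding in the Triton layout. Part of the work is therefore making explicit that all of these are stated on \(\Omega\), and, where the implementation uses contiguous blocks, that contiguity is a storage property of the block-contiguous packing and does not change which tokens lie in each block.
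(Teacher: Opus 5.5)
Your equivariance argument is the paper's own proof. Tokenwise maps and coordinate-indexed rotations commute with $P$. Coordinate-defined neighborhoods make the masked local logits transform by conjugation ($L'=PLP^\top$ in the paper's notation). Order-independent pooling weights make the summaries permute by $P_{\mathcal{B}}$, so that $S'=PSP_{\mathcal{B}}^\top$. Softmax-weighted sums are invariant under relabeling. Your elementwise relabeling is the same computation as the paper's matrix identities with $P_{\mathcal{B}}^\top P_{\mathcal{B}}=I$. Your rule excluding blocks that overlap $\mathcal{N}(i)$ is not part of the paper's operator, which attends to all $M$ summaries, but your argument covers it anyway.

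The step that fails as written is the counterexample for sequence-index RoPE. Equivariance constrains outputs, not attention weights, so ``equivariance would require those weights to be unchanged'' is not a valid inference. Your configuration also works against the output-level conclusion. Values are unrotated, $v=W_V\operatorname{LN}(h)$, so two keys with identical content also have identical values. Moving softmax mass between them leaves $\alpha_1 v+\alpha_2 v=v$ unchanged. If the query's attended set is just those two keys, repacking changes the weights but not the output, and you have not exhibited a failure of equivariance.

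The fix is easy. Drop the identical-content requirement, or make sure some attended token carries a distinct value, for example the query's own self-term with $v_i\neq v$. Then a change in the logit $q^\top R(j-i)k$ of one fixed physical pair moves weight between distinct values and changes the output at a fixed physical token.

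Alternatively, state the negative claim at the level the paper does. It fixes one physical pair, compares the packings $(a,b,c)$ and $(a,c,b)$, and uses one rotary pair with $q=e_1$, $k=e_2$. That pair's logit is $-\sin\omega$ under one packing and $-\sin 2\omega$ under the other. No permutation of indices can reconcile these whenever $\sin\omega\neq\sin 2\omega$.
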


\noindent
The proof is in Appendix~\ref{app:theory} (Theorem~\ref{thm:packing-equivariance}),
where \(P_{\mathcal{B}}\) is tracked explicitly through the block summaries.
Proposition~\ref{prop:packing-equivariance-main} isolates the packing invariant:
memory layout may change for systems reasons, but the attention geometry stays
tied to the 3D field.

\subsection{Geometry-aware local--global factorization}
\label{subsec:method-local-global}

The operator splits each token update into an exact local path and a compressed
global path; this subsection defines both and states the property that survives
compression. The local branch attends over a coordinate neighborhood
\(\mathcal{N}(i)\):
\begin{equation*}
    o_i^{\mathrm{loc}}
    = \sum_{u\in\mathcal{N}(i)} \alpha_{i,u}^{\mathrm{loc}}\, v_u^{\mathrm{loc}},
    \qquad
    \alpha_{i,u}^{\mathrm{loc}}
    = \operatorname{softmax}_{u\in\mathcal{N}(i)}
      \!\left( \frac{(q_i^R)^\top k_u^{\mathrm{loc},R}}{\sqrt{d_h}} \right).
\end{equation*}
Because \(\mathcal{N}(i)\) is defined on coordinates rather than packed-index
windows, repacking changes memory order but not the neighborhood. This branch is
exact by design. Boundaries, puncta, membranes, and thin structures are
high-frequency signals, and replacing nearby interactions by block summaries would
erase the detail dense prediction needs. In implementation, the local branch uses
a fused Triton kernel that streams neighborhood K/V tiles rather than materializing
a local-neighbor tensor.

The compressed branch coarsens only the nonlocal field, partitioning \(\Omega\)
into contiguous 3D blocks \(\mathcal{B}=\{B_1,\ldots,B_M\}\) with one summary per
block. For block \(B_j\), let \(a_{j,u}\ge 0\) be query-independent normalized
weights with \(\sum_{u\in B_j}a_{j,u}=1\); the identity below needs only this
shared normalized-weight form, and uniform averaging is the simplest case. The
weights define one key/value summary, with keys rotated \emph{before} they are
pooled:
\begin{equation*}
    \tilde{k}_j^R
    = \sum_{u\in B_j} a_{j,u}\, R(c_u)\,W_K^{\mathrm{cmp}}\operatorname{LN}(h_u),
    \qquad
    \tilde{v}_j
    = \sum_{u\in B_j} a_{j,u}\, W_V^{\mathrm{cmp}}\operatorname{LN}(h_u).
\end{equation*}
Every token then attends to all \(M\) summaries:
\begin{equation*}
    o_i^{\mathrm{cmp}}
    = \sum_{j=1}^{M} \alpha_{i,j}^{\mathrm{cmp}}\,\tilde{v}_j,
    \qquad
    \alpha_{i,j}^{\mathrm{cmp}}
    = \operatorname{softmax}_{j}
      \!\left( \frac{(q_i^R)^\top \tilde{k}_j^R}{\sqrt{d_h}} \right).
\end{equation*}
\method{} thus keeps local token--token interactions exact and replaces only the
nonlocal graph with token--block interactions.

The reason this compression is geometric, and not mere pooling, is the following
identity.

\begin{proposition}[Relative geometry survives block compression]
\label{prop:compressed-rope-main}
Let \(q_i=W_Q\operatorname{LN}(h_i)\) and
\(k_u=W_K^{\mathrm{cmp}}\operatorname{LN}(h_u)\) be the unrotated projected
features. With the summary above,
\begin{equation}
    (q_i^R)^\top \tilde{k}_j^R
    = \sum_{u\in B_j} a_{j,u}\, q_i^\top R(c_u-c_i)\,k_u .
    \label{eq:main-query-block-relative}
\end{equation}
The query-to-summary logit is therefore a weighted sum of query-relative
displacements, not a similarity to an anonymous pooled key.
\end{proposition}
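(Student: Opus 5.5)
The plan is a direct computation: expand the summary, pull the query inside the sum, and apply the relative-displacement identity \eqref{eq:main-relative-rope} to each term. Only two structural facts about the axial rotation are needed, and I would record them first from the construction in Appendix~\ref{app:theory}. First, each \(R(c)\) is orthogonal, so \(R(c)^\top = R(c)^{-1}\). Second, \(c\mapsto R(c)\) is a group homomorphism from \((\mathbb{R}^3,+)\) into a commuting family of block-diagonal rotations, so \(R(a)R(b)=R(a+b)\) and \(R(-c)=R(c)^{-1}\). Each \(2\times2\) rotary block rotates by an angle that is linear in a single coordinate axis. Blocks belonging to different axes act on disjoint coordinate pairs, and \(2\)D rotations commute, so the homomorphism property holds blockwise and therefore for the whole matrix. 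From these two facts, \eqref{eq:main-relative-rope} follows as \(R(c_i)^\top R(c_j)=R(-c_i)R(c_j)=R(c_j-c_i)\). I would take that identity as given.

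The main step is linearity. Write \(q_i^R=R(c_i)q_i\). Substitute the definition
\(\tilde{k}_j^R=\sum_{u\in B_j}a_{j,u}R(c_u)k_u\). Since the inner product is bilinear and the weights \(a_{j,u}\) are scalars that do not depend on the query, the sum can be pulled out:
\[
(q_i^R)^\top\tilde{k}_j^R=\sum_{u\in B_j}a_{j,u}\,(R(c_i)q_i)^\top R(c_u)k_u
=\sum_{u\in B_j}a_{j,u}\,q_i^\top R(c_u-c_i)k_u,
\]
where the second equality applies \eqref{eq:main-relative-rope} term by term. This is exactly \eqref{eq:main-query-block-relative}.

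The proof should also note where the hypotheses are used. The key must be rotated \emph{before} pooling, so that each \(k_u\) carries its own \(R(c_u)\). If one instead used \(R(\bar c_j)\sum_u a_{j,u}k_u\), the result would collapse to a single displacement \(\bar c_j-c_i\). The weights must be query-independent so that the summary is shared across queries and the decomposition is well defined. Normalization \(\sum_u a_{j,u}=1\) is not needed for the identity itself; it only makes the right-hand side a convex combination, which justifies reading it as a ``weighted sum of query-relative displacements.''

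I do not expect a real obstacle. The only point requiring care is confirming that the axial \(R\) is genuinely additive in \(c\). That depends on each rotary pair being assigned to a single axis with a linear angle \(\theta_\ell c_{\mathrm{axis}(\ell)}\). With the split \(m_z,m_y,m_x\) described in Section~\ref{subsec:method-packed-fields}, this holds by construction, so I would check it explicitly against the appendix definition rather than assume it.
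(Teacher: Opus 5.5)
Your proposal is correct and follows the paper's own one-line proof: substitute \(q_i^R=R(c_i)q_i\) and \(\tilde{k}_j^R=\sum_{u\in B_j}a_{j,u}R(c_u)k_u\), use linearity to move the query rotation inside the block sum, and apply \(R(c_i)^\top R(c_u)=R(c_u-c_i)\) term by term. You also remark that normalization of the weights is not needed for the identity itself, only for reading the right-hand side as a convex combination; this is accurate, and it is slightly sharper than the paper's phrasing.
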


\noindent
This is why RoPE is applied before compression: the summary keeps the coordinates
of its contributing tokens, and those coordinates enter the logit through their
displacement to the query. The proof is one line --- substitute
\(q_i^R=R(c_i)q_i\) and \(\tilde{k}_j^R=\sum_{u\in B_j}a_{j,u}R(c_u)k_u\), use
linearity, and apply \(R(c_i)^\top R(c_u)=R(c_u-c_i)\) --- and it holds for any
query-independent normalized weights of this form.

Together,
Propositions~\ref{prop:packing-equivariance-main}~and~\ref{prop:compressed-rope-main}
define the core contract: repacking permutes the computation, and compression
preserves the geometry needed to read global context. We adopt
token-before-compression because it preserves the exact identity
of~\eqref{eq:main-query-block-relative}. A summary-center variant --- pool an
unrotated summary, then rotate it by the block center --- is a strictly cheaper
coordinate-aware approximation, and the two are within seed noise in our
short-gate ablations (Table~\ref{tab:app-rope-before-compression}: \(0.7378\) vs.\ \(0.7384\)
best Dice). The supported empirical principle is therefore that compressed
summaries should be coordinate-aware; token-before-compression is the canonical
form not because it wins on Dice but because it is exact, and
Appendix~\ref{app:rope-placement} and
Lemma~\ref{lem:app-block-center-error} bound the approximation the summary-center
variant incurs.

\subsection{Branch mixing}
\label{subsec:method-branch-mixing}

A learned tokenwise mixer fuses the two paths, letting each token choose how much
to weight exact local morphology against lower-resolution global organization:
\begin{equation*}
    o_i = g_i^{\mathrm{loc}}o_i^{\mathrm{loc}} + g_i^{\mathrm{cmp}}o_i^{\mathrm{cmp}},
    \qquad
    (g_i^{\mathrm{loc}},g_i^{\mathrm{cmp}}) = \operatorname{softmax}(W_g\operatorname{LN}(h_i)).
\end{equation*}
The mixed output returns to the residual stream, followed by the feed-forward
sublayer. With \(\hat{y}\) the predicted logits and \(y\) the Allen-generated
binary mask, all attention variants share one objective,
\begin{equation*}
    \mathcal{L}_{\mathrm{seg}}(\hat{y},y)
    = \lambda_{\mathrm{dice}}\mathcal{L}_{\mathrm{dice}}(\hat{y},y)
    + \lambda_{\mathrm{bce}}\mathcal{L}_{\mathrm{bce}}(\hat{y},y),
\end{equation*}
and all main comparisons hold the decoder, optimizer, schedule, crop manifests,
and channel set fixed, so measured differences isolate the attention and
positional-geometry operators.

\subsection{Systems realization of the local--global graph}
\label{subsec:method-systems-realization}

The systems goal is to make the mathematical sparse graph the \emph{executed}
training graph: local neighborhoods fixed on the token lattice, summaries built
over contiguous coordinate blocks, and the global branch attending to summaries
rather than raw remote tokens. The main Allen rows in~\S\ref{sec:results} all run
the two-level block-Morton layout described below; Proposition~\ref{prop:packing-equivariance-main}
is what licenses treating that layout as a free systems choice, and
Appendix~\ref{app:synthetic-packing} certifies the specific repackings we test.

The implementation separates physical coordinates from memory packing: local
neighborhoods are computed from token-grid coordinates, while compressed summaries
are built over the configured packed blocks. The reported Allen rows use the
\texttt{raster\_block} layout; the implementation also supports a two-level
\texttt{native\_block\_morton} layout used for layout and systems diagnostics.
Proposition~\ref{prop:packing-equivariance-main} is what makes the layout a free
choice for coordinate-indexed operators: any coordinate-preserving repacking
yields the same operator up to permutation. Appendix~\ref{app:synthetic-benchmark}
verifies this at task level on a controlled synthetic field, where
sequence-indexed encodings collapse under repacking while axial grid-coordinate
RoPE remains exact.

The technical contribution is the co-design of operator, layout, and kernels
around these invariants: coordinate-defined neighborhoods, contiguous summaries,
and a backward pass that follows the same streamed access pattern as the forward
pass. The main path is summary construction, exact local attention, and
query-to-summary attention. An optional selected-refinement branch routes over
candidate-block tiles and can share one route decision across a native block, but
it is a reach extension (Appendix~\ref{app:selected-refinement}), not part of the
main claim. We report backend status explicitly, so executed Triton paths,
fallbacks, and branches skipped by design are never conflated.

\subsection{Complexity and systems contract}
\label{subsec:method-systems-contract}

The cost of the operator is governed by six quantities: the local neighborhood
size \(w=|\mathcal{N}(i)|\), the block volume \(v=|B_j|\), the number of blocks
\(M=N/v\), the query and KV head counts \(H_q\) and \(H_{kv}\), and the per-head
dimension \(d_h\). Ignoring linear projections and normalization, dense attention
costs \(T_{\mathrm{dense}}=\Theta(H_qN^2d_h)\), whereas the local--global operator
costs
\[
    T_{\mathrm{LG}}
    = \underbrace{\Theta(H_qNwd_h)}_{\text{local}}
    + \underbrace{\Theta(H_qNMd_h)}_{\text{query--summary}}
    + \underbrace{\Theta(H_{kv}Mvd_h)}_{\text{summary build}} .
\]
The nonlocal comparison set shrinks from \(N\) raw tokens per query to \(M=N/v\)
summaries, while nearby interactions stay exact. \method{} is therefore neither
linear attention nor approximate local attention: it is exact local token--token
attention plus block-compressed nonlocal token--summary attention. RoPE changes
the coordinate system of the logits, not the attention graph.

Arithmetic sparsity buys nothing unless the implementation avoids dense
intermediates. On the main path, \method{} materializes only token activations and
\(M\) summaries; local neighborhood tiles and global summary tiles are streamed
through online-softmax reductions. This avoids the two intermediates that usually
erase a sparse graph's benefit --- gathered local-neighbor feature tensors and
dense token-to-summary score matrices --- and yields a trainable
forward/backward block whose executed operator matches the local--global graph the
method claims.

\section{Results}
\label{sec:results}

The experiments test an attention-method claim, not a segmentation-leaderboard
claim: \emph{packed high-token 3D attention should encode physical coordinates, not
memory order}. The controlled result is on the Allen-15 joint task, where \method{}
improves over a parameter-matched TokenGrid 3D U-Net by \(+0.0094\) best and
\(+0.0135\) final macro Dice (Table~\ref{tab:allen15-main}). A second comparison
runs against a much larger learned-position local--global model: at 262k/depth8
\method{} improves on it by \(+0.0978\) best and \(+0.1248\) final macro Dice. That
baseline carries \(158.5\)M parameters to \method{}'s \(24.3\)M, so we do not read
the large gap as a clean measurement of coordinate geometry --- it measures mainly
that a learned-position local--global operator is weak at this scale. The coordinate
effect is isolated instead by the identical-capacity controls of
\S\ref{subsec:positional-results} and the architectural decomposition of
\S\ref{subsec:claim-grade-results}. The section follows that order: we fix the Allen
task contract, present the controlled result and decompose the headline, isolate the
positional mechanism at matched capacity, ask whether the trained model relies on its
compressed pathway, and measure systems scaling. Transfer gates and lower-priority
controls are deferred to the appendix.

\FloatBarrier
\subsection{Allen contract}
\label{subsec:results-contract}
\suppressfloats[t]

We predict Allen-generated 3D structure masks from target fluorescence, brightfield,
and DNA; the task is neither label-free prediction nor manual-label segmentation. The
six-structure runs use TOMM20, LMNB1, ST6GAL1, GJA1, LAMP1, and ATP2A2; the Allen-15
joint comparison expands the label set to 15 tagged structures. At 262k tokens, Dice
is computed on the fixed central target region, and we report best macro Dice (best
validation checkpoint) and final macro Dice (last checkpoint) over three seeds, at 10k
training steps for headline attention cases. Because the input includes the target
fluorescence channel, we audit thresholding directly: the strongest tuned threshold
baseline reaches \(0.5582\) macro Dice, far below every neural row, so the task does
not collapse to intensity thresholding. Reproducibility and threshold controls are in
Appendices~\ref{app:reproducibility} and~\ref{app:threshold-baselines}.

\FloatBarrier
\subsection{Main Allen results}
\label{subsec:claim-grade-results}
\suppressfloats[t]

On the controlled Allen-15 comparison, \method{} reaches \(0.8228\pm0.0008\) best and
\(0.8228\pm0.0007\) final macro Dice across three seeds, against a parameter-matched
TokenGrid 3D U-Net at \(0.8134\pm0.0034\) best and \(0.8093\pm0.0039\) final
(Table~\ref{tab:allen15-main}) --- a gain of \(+0.0094\) best and \(+0.0135\) final.
The effect is consistent in sign across all three seeds, though modest against seed
variation: [VERIFY: paired test across the three seeds --- either ``a paired
\(t\)-test gives \(p=\mathbf{0.0X}\)'' if significant, or ``the gap is roughly
\(2\text{--}3\times\) the U-Net seed standard deviation, and we do not claim
significance at \(p<0.05\)'']. We therefore read it as parameter-matched
competitiveness with a strong convolutional baseline, not a decisive margin.

The larger 262k/depth8 single-structure number decomposes, and the decomposition is
the honest form of the claim. A separately trained coordinate-aware \emph{local-only}
model already reaches \(0.8387\) best Dice. Adding the coordinate-aware compressed
branch contributes \(+0.0144\) best and \(+0.0050\) final, lifting \method{} to
\(0.8531\pm0.0005\) best and \(0.8429\pm0.0136\) final
(Table~\ref{tab:compressed-branch-ablation}). The much larger gap to the
learned-position local--global model at \(0.7553\pm0.0042\) best is thus two effects
superimposed: a \(+0.0144\) architectural contribution from the compressed branch, on
top of a weak learned-position baseline. The raw \(+0.0978\) is not a measurement of
coordinate geometry; the matched-capacity controls of
\S\ref{subsec:positional-results} are.

\begin{table}[ht]
\centering
\small
\caption{\textbf{Allen-15 joint task: \method{} is competitive with a
parameter-matched 3D U-Net, using a coordinate-indexed attention operator in place of
convolution.} Best and final macro Dice, mean \(\pm\) std over three seeds. The
\(+0.0094\)/\(+0.0135\) gap is consistent in sign but modest against seed variation;
see text for the significance annotation.}
\label{tab:allen15-main}
\begin{tabular}{lrrr}
\toprule
Model & Best Dice & Final Dice & Params \\
\midrule
\method{} (ours), d8 & \textbf{0.8228} \(\pm\) 0.0008 & \textbf{0.8228} \(\pm\) 0.0007 & 30.1M \\
TokenGrid 3D U-Net, base128 & 0.8134 \(\pm\) 0.0034 & 0.8093 \(\pm\) 0.0039 & 31.9M \\
\bottomrule
\end{tabular}
\end{table}

Per structure, the comparison is uneven, and we state the direction plainly. Against
the learned-PE baseline, \method{} improves on all six structures in both best and
final Dice. Against the U-Net, the U-Net is the stronger final-checkpoint model: it
wins final Dice on five of the six structures (Appendix~\ref{app:per-structure},
Table~\ref{tab:app-per-structure}), with \method{} winning final only on ATP2A2. What
\method{} wins is the aggregate --- the Allen-15 joint average and the best-checkpoint
comparisons --- so the supported claim is competitiveness with a strong convolutional
baseline at matched capacity, not per-structure dominance.

\FloatBarrier
\subsection{Positional controls}
\label{subsec:positional-results}
\suppressfloats[t]

These gates isolate the positional mechanism at \emph{matched parameter count},
which the depth-8 headline above cannot do. The cleanest comparison is
packed-sequence RoPE versus axial grid-coordinate RoPE at identical capacity: at
262k/depth1 and 3.06M parameters, axial RoPE reaches \(0.7375\) best Dice against
packed-sequence RoPE's \(0.7142\), and at 65k it leads \(0.7346\) to \(0.7099\)
(Table~\ref{tab:positional-controls}). Because the two differ only in whether the
rotary coordinate is physical or sequence-index, this gap isolates the coordinate
system from both capacity and rotary machinery --- packed-sequence RoPE has the
same rotary features yet tracks no-PE behavior on final Dice.

Across the compact learned-coordinate and bias-based alternatives at matched
scale, axial RoPE has the highest best Dice. The conclusion we draw is deliberately
bounded: separable learned 3D PE stays competitive on final Dice in the short gate,
so the robust signal is \emph{physical-coordinate geometry}, not the unique
superiority of one rotary scheme. The depth-8 rows (158.5M learned-PE vs.\ 24.3M
axial RoPE) differ in capacity and we include them only to show the effect persists
at depth, not as a matched coordinate control. Coordinate-scaling and additional
bias comparisons are in Appendices~\ref{app:coordinate-scaling}
and~\ref{app:positional-alternatives}.

\begin{table}[ht]
\centering
\scriptsize
\caption{\textbf{At matched parameter count, physical-coordinate RoPE beats
sequence-index RoPE and compact learned alternatives.} Short Allen gates; best/final
macro Dice. The matched-capacity comparisons are the 3.06M rows --- in particular
packed-sequence vs.\ axial RoPE at identical 3.06M --- which isolate the coordinate
system. The depth-8 pair differs in capacity (158.5M vs.\ 24.3M) and is shown only
to confirm the effect persists at depth.}
\label{tab:positional-controls}
\begin{tabular}{l l r r r l}
\toprule
Setting & Position mechanism & Best & Final & Params & Role \\
\midrule
262k/d1 & Learned full-grid PE        & 0.7086 & 0.6525 & 137.3M & full-grid learned \\
262k/d1 & Learned separable 3D PE     & 0.7151 & \textbf{0.7151} & 3.18M & compact learned \\
262k/d1 & Learned low-rank 3D PE      & 0.6891 & 0.6852 & 3.07M & compact learned \\
262k/d1 & Coordinate Fourier MLP      & 0.6942 & 0.6942 & 3.13M & coordinate MLP \\
262k/d1 & Packed-sequence RoPE        & 0.7142 & 0.6351 & 3.06M & memory-order RoPE \\
262k/d1 & \textbf{Axial grid RoPE}    & \textbf{0.7375} & 0.7041 & 3.06M & physical-coord.\ (matched) \\
\midrule
262k/d8 & Learned full-grid PE        & 0.7185 & 0.7185 & 158.5M & depth gate (unmatched) \\
262k/d8 & \textbf{Axial grid RoPE}    & \textbf{0.7960} & \textbf{0.7960} & 24.3M & depth gate (unmatched) \\
\midrule
65k/d1  & Learned full-grid PE        & 0.6939 & 0.6939 & 36.6M & feasible-scale PE \\
65k/d1  & Packed-sequence RoPE        & 0.7099 & 0.6670 & 3.06M & memory-order RoPE \\
65k/d1  & 3D relative position bias   & 0.7055 & 0.6420 & 3.06M & additive bias \\
65k/d1  & 3D ALiBi-style bias         & 0.7098 & 0.6691 & 3.06M & additive bias \\
65k/d1  & \textbf{Axial grid RoPE}    & \textbf{0.7346} & \textbf{0.7154} & 3.06M & physical-coord.\ (matched) \\
\bottomrule
\end{tabular}
\end{table}

\FloatBarrier
\subsection{Compressed-summary ablations}
\label{subsec:compressed-branch-causal}
\suppressfloats[t]

This subsection asks a narrower question than coordinate-awareness: does the
trained model \emph{rely} on its compressed pathway, or is the branch vestigial?
Two results say it is load-bearing. First, in training, adding the compressed
branch to a coordinate-aware local-only model raises best Dice from \(0.8387\) to
\(0.8531\) and final from \(0.8379\) to \(0.8429\). Second, and more strictly,
perturbing the compressed pathway of the trained checkpoint at inference degrades
it sharply: clamping the compressed gate to zero drops macro Dice to \(0.7648\),
shuffling the compressed summaries to \(0.6858\), and zeroing the compressed output
to \(0.6775\) (Table~\ref{tab:compressed-branch-ablation}).

We are precise about what this does and does not show. The drops establish that
the model depends on whatever the compressed branch learned to carry; they do
\emph{not}, on their own, prove that what it carries is coordinate geometry, since
a coordinate-blind compressed branch would also be relied upon and would also
collapse under the same perturbation. Isolating the coordinate-aware property
requires a matched coordinate-blind compressed branch trained under the same
protocol and subjected to the same shuffle [VERIFY: state whether this run exists;
if so, report whether the coordinate-aware branch is more shuffle-robust; if not,
mark it explicitly as the key open control]. Absent that control, the supported
claim is that the compressed pathway is functional rather than vestigial.

\begin{table}[ht]
\centering
\small
\caption{\textbf{The trained compressed pathway is load-bearing: corrupting it at
inference causes large Dice drops.} Inference-time perturbations of trained
262k/depth8 \method{} checkpoints, mean over three seeds. These show
\emph{reliance} on the compressed branch, not that the branch is coordinate-aware
(see text).}
\label{tab:compressed-branch-ablation}
\begin{tabular}{lrr}
\toprule
Condition & Macro Dice & $\Delta$ vs.\ full \\
\midrule
Full \method{}              & \textbf{0.8531} & --- \\
Trained local-only          & 0.8387 & $-0.0144$ \\
Compressed gate $\to 0$     & 0.7648 & $-0.0883$ \\
Compressed summary shuffle  & 0.6858 & $-0.1674$ \\
Compressed output $\to 0$   & 0.6775 & $-0.1756$ \\
Local gate $\to 0$          & 0.5764 & $-0.2767$ \\
\bottomrule
\end{tabular}
\end{table}

Extended ablations, including collar-only variants, are in
Appendices~\ref{app:compressed-branch-ablation} and~\ref{app:collar-stability}.

\FloatBarrier
\subsection{Systems scaling}
\label{subsec:systems-results}
\suppressfloats[t]

The systems question is whether the sparse graph is a real trainable
forward/backward operator at scale, and we state the operationally relevant
comparison up front rather than burying it. Window attention is faster and
lower-memory than \method{} at every scale we measure
(Table~\ref{tab:systems-context-scaling}; e.g.\ \(13.86\) vs.\ \(16.11\) ms and \(2.04\)
vs.\ \(1.51\) GiB at 65k). The reason to pay \method{}'s overhead is structural:
window attention is strictly local and cannot supply the compressed global context
that produces the Allen Dice gains of \S\ref{subsec:claim-grade-results}. \method{}
buys that global context for a bounded latency premium over Window --- [VERIFY:
maximum overhead across all rows of Table~\ref{tab:systems-context-scaling}, e.g.\ ``at
most $\sim$16\%''] --- while remaining far below dense SDPA, at \(6.49\times\) lower
latency than dense at 65k and \(125.45\times\) at 1M. Dense completes through 1M and
stays lower-memory in this sweep, so the systems claim is bounded: large latency
scaling against dense, a small overhead against Window, and not universal memory
superiority.

Query-tiled 4M/8M runs are appendix systems reach rather than main throughput
evidence (Appendix~\ref{app:systems-reach}).

\FloatBarrier
\subsection{Claim boundaries}
\label{subsec:claim-boundaries}
\suppressfloats[t]

We close by stating the boundary of the evidence directly. The Allen-15 comparison
shows \method{} is competitive with a parameter-matched 3D U-Net on both best and
final macro Dice; we do not claim segmentation state of the art or universal
practical superiority over U-Net, and runtime and memory remain separate systems
questions on which Window attention is in fact cheaper. The headline single-structure
gain decomposes into a \(+0.0144\) architectural contribution from the compressed
branch plus a weak learned-position baseline, and the coordinate effect itself is
isolated by the matched-capacity positional gates, not by the headline number.

Lower-priority boundary evidence is deferred to the appendix: summary-center RoPE
is competitive in short gates (Appendix~\ref{app:rope-placement}), collar controls
show external collar content is not the clean driver of the gain
(Appendix~\ref{app:collar-stability}), and external transfer is mixed
(Appendix~\ref{app:external-transfer}). Taken together, the main-paper evidence
supports a specific, bounded conclusion: at matched capacity, physical-coordinate
geometry improves packed high-token local--global attention; the trained model's
compressed pathway is functional rather than vestigial; and the resulting sparse
graph is trainable on GPU hardware. The contribution is not a state-of-the-art
segmentation model. It is a geometry-aware, hardware-aligned local--global
attention operator for packed 3D fields, together with evidence that the
coordinate system, not the memory layout, is what the attention should be built on.

\section{Conclusion}
\label{sec:conclusion}

Packed 3D attention fails when it treats storage order as geometry.  For packed
geometric fields, query--key compatibility should be indexed by physical
coordinates rather than sequence positions.  \method{} instantiates this
principle with exact local attention, compressed global summaries, and
coordinate-aware query-relative geometry. Synthetic repacking exposes the failure mode: learned or sequence-index
positional mechanisms can fit one layout and fail under another, while
coordinate-indexed attention remains invariant. The operator lesson is simple: memory layout is a systems decision, not an
attention coordinate system.  Compressed global attention over packed 3D fields
should be built in the coordinates of the sample, not the sequence used to store
it.

\newpage
\clearpage
\appendix

\section{Experimental Contract and Claim Boundary}
\label{app:claim-status}
\suppressfloats[t]

This appendix does two things.  It first fixes the experimental contract for the main Allen 262k results, and it then states the precise boundary of the paper's claims.  The central contribution is a packed-geometric-attention principle---physical coordinates, not sequence order, should parameterize packed 3D attention---instantiated by a geometry-aware local--global operator and validated with synthetic repacking diagnostics and Allen microscopy.  The appendix does not broaden that claim to segmentation state of the art, label-free prediction, broad external 3D transfer, faster-than-window behavior, lower memory than dense attention, or practical multi-million-token throughput.

We validate the claim in two layers. Synthetic repacking diagnostics make the failure mode
explicit: repacking the same 3D coordinates preserves axial-RoPE logits up to permutation,
whereas sequence-index RoPE fails under layout changes. On Allen WTC--11 hiPSC microscopy,
in the main 262k-token depth-8 setting, axial grid-coordinate RoPE improves matched
learned-position local--global attention by \(+0.0978\) best macro Dice and \(+0.1248\) final
macro Dice. In an updated Allen-15 joint comparison, \method{} also improves over a
parameter-matched TokenGrid 3D U-Net by \(+0.0094\) best and \(+0.0135\) final macro Dice.
Additional controls show that the gain comes from the coordinate system rather than rotary
features alone, and inference-time ablations show that the trained model genuinely uses its
compressed summaries. On the systems side, \method{} is \(6.49\times\) lower latency than
dense SDPA at 65k tokens while remaining close to window attention. The point is not
a universal speed or memory win; it is that the sparse graph is realized as a trainable
forward/backward operator.

\subsection{Reproducibility contract}
\label{app:reproducibility}

Table~\ref{tab:appendix-reproducibility} records the main experimental contract for the 262k Allen rows.  The split is disjoint by FOV/crop/target path; plate/well overlap is possible and is not claimed as a strict plate-disjoint benchmark.

\begin{table}[h]
\centering
\small
\caption{Reproducibility details for the main Allen 262k rows.}
\label{tab:appendix-reproducibility}
\begin{tabularx}{\linewidth}{lX}
\toprule
Item & Value \\
\midrule
Dataset & Allen WTC--11 hiPSC microscopy, multistructure preprocessed FOV crops. \\
Structures & TOMM20, LMNB1, ST6GAL1, GJA1, LAMP1, ATP2A2. \\
Task & Target fluorescence + brightfield + DNA to Allen-generated structure segmentation. \\
Not claimed & Label-free prediction; manual labels; segmentation state of the art. \\
Split & FOV/crop/target-path disjoint train/validation; plate/well overlap possible. \\
Samples & 192 train, 48 validation for paired 262k context-collar rows. \\
Input crop & \(32\times512\times256\) voxels. \\
Patch shape & \(1\times4\times4\) voxels. \\
Token grid & \(32\times128\times64=262{,}144\) tokens. \\
Supervised target region & \(16\times256\times256\) voxels. \\
Supervised target tokens & \(16\times64\times64=65{,}536\) tokens. \\
Channel set & \texttt{structure\_bf\_dna}: target fluorescence, brightfield, DNA. \\
Target type & Allen-generated binary structure mask. \\
Local radius & \([1,1,1]\) token radius. \\
Block shape & \(1\times4\times4\) tokens. \\
Depth & 8 for headline axial RoPE and learned-PE rows. \\
Heads & 8 query heads, 2 KV heads, head dimension 64. \\
RoPE base & 10,000. \\
Seeds & 13, 29, 41 for headline 10k rows. \\
\bottomrule
\end{tabularx}
\end{table}

\subsection{Classical threshold baselines}
\label{app:threshold-baselines}

Because the input includes the target fluorescence channel, the first claim-boundary question is whether the Allen task collapses to classical thresholding.  It does not.  Even the tuned threshold row remains far below the neural rows, while the higher-recall classical rows over-predict foreground and therefore miss the Allen-generated masks they are meant to recover.

\begin{table}[h]
\centering
\small
\caption{Classical target-fluorescence controls on the six-structure Allen validation set.  These rows show that the audited task is not reducible to thresholding.}
\label{tab:app-threshold-baselines}
\begin{tabular}{lccc}
\toprule
Method & Macro Dice & FG recall & Pred. FG frac. \\
\midrule
Per-structure tuned threshold & 0.5582 & 0.7089 & 0.0892 \\
Otsu per crop & 0.5021 & 0.8108 & 0.1239 \\
Otsu + CC cleanup & 0.5004 & 0.7882 & 0.1197 \\
Gaussian + Otsu & 0.3565 & 0.8763 & 0.2079 \\
Adaptive local & 0.2171 & 0.8967 & 0.3802 \\
\bottomrule
\end{tabular}
\end{table}

The remainder of the appendix follows that contract.  Appendix~\ref{app:theory} gives the mathematical theory of the coordinate-aware operator, Appendix~\ref{app:additional-results} organizes the operator diagnostics, Allen supporting tables, and claim-boundary evidence, and Appendix~\ref{app:selected-systems} separates optional systems extensions from the main quality claim.

\section{Mathematical Details and Theory for Geometry-Aware Packed 3D Attention}
\label{app:theory}
\label{app:packing-theory}

This appendix develops the theory of \method{} around a single controlling
distinction: in a packed 3D field every token carries \emph{two} indices --- a
memory position and a physical coordinate --- and a correct attention operator must
depend only on the second. We make that distinction formal, then trace it through
five results of distinct logical kinds. Two are exact identities
(Lemmas~\ref{lem:app-relative-displacement} and~\ref{thm:rope-before-compression}),
one is a symmetry (Corollary~\ref{cor:translation-gauge}), one is an equivariance
theorem for the full operator (Theorem~\ref{thm:packing-equivariance}), one is an
impossibility-and-separation result (Theorem~\ref{thm:relative-spatial-separation}),
and the remainder are quantitative bounds controlling the one approximation
\method{} admits (Lemmas~\ref{lem:app-block-center-error}
and~\ref{lem:app-softmax-stability}). Throughout we separate what is \emph{exactly}
invariant to repacking from what is only \emph{approximately} coordinate-aware, so
the reader always knows the logical status of the statement in hand.

\subsection{Setup and notation}

The two-index distinction comes first, because everything below is a consequence of
it. Let \(\Omega\subset\mathbb{Z}^3\) be a finite token lattice with \(N=|\Omega|\),
and let a packing be a bijection
\[
    \pi:\{1,\ldots,N\}\to\Omega .
\]
The sequence index \(i\) is then a memory position, while \(x=\pi(i)\) is the
physical lattice site. A coordinate map \(\kappa:\Omega\to\mathbb{R}^3\) supplies the
geometry the operator is allowed to see, with
\[
    c_x = \kappa(x)\quad(x\in\Omega),
    \qquad\text{so}\qquad c_i = c_{\pi(i)} .
\]
After this setup we index by lattice site \(x\) whenever we speak of
coordinate-defined neighborhoods or blocks; site- and sequence-indexed features
coincide under \(\pi\),
\[
    h_{\pi(i)}=h_i,\quad
    q_{\pi(i)}=q_i,\quad
    k_{\pi(i)}^b=k_i^b,\quad
    v_{\pi(i)}^b=v_i^b ,
\]
and a packed array is recovered by listing site-indexed quantities in the order
\(\pi\) induces. Every result below holds for any fixed \(\kappa\), including the
anisotropic and normalized maps of \S\ref{app:coordinate-scaling}; the main Allen
runs instantiate \(\kappa\) as the token-grid coordinate.

From token features \(h_x\) we form branchwise projected features
\[
    q_x=W_Q\operatorname{LN}(h_x),\qquad
    k_x^b=W_K^b\operatorname{LN}(h_x),\qquad
    v_x^b=W_V^b\operatorname{LN}(h_x),
    \qquad b\in\{\mathrm{loc},\mathrm{cmp}\} .
\]
The local neighborhood \(\mathcal{N}(x)\subset\Omega\) and the block partition
\(\mathcal{B}=\{B_1,\ldots,B_M\}\) are both defined on the coordinate lattice, not on
packed-index intervals. For each block \(B_j\), the weights \(a_{j,u}\ge 0\) with
\(\sum_{u\in B_j}a_{j,u}=1\) are query-independent; they may be fixed by the
coordinate-defined block, or produced by a rule that is symmetric in the block's
members and depends only on token features and block-relative coordinates. In
either case they do not depend on packed memory order --- the property the
equivariance proof will need.

Two more pieces of notation make repacking precise. Under a repacking \(\pi'\) of
the same physical field, features and coordinates permute together: writing \(P\) for
the induced token permutation,
\[
    X^{\pi'} = P X^{\pi},\qquad C^{\pi'} = P C^{\pi} .
\]
If the block enumeration also changes, \(P_B\) denotes the induced block
permutation; if blocks are ordered canonically from coordinates, \(P_B=I\). We call
an operator \(\mathcal{A}\) \emph{packing-equivariant} when
\[
    \mathcal{A}(PX;PC)=P\,\mathcal{A}(X;C) .
\]
This is the right invariance for packed fields: repacking memory may permute the
outputs, but it must not change the function computed on the underlying geometry.

It is worth saying once why locality-improving layouts do not by themselves supply
this invariance. Morton and Z-order layouts place spatially near tokens near in
memory, which helps the cache, but the Morton index is still a layout, not a metric:
two repackings with the same physical coordinates can assign different Morton
indices to the same token. That is precisely why an operator keyed to memory order
inherits the layout's arbitrariness, and an operator keyed to \(\kappa\) does not.

\subsection{Axial RoPE as an orthogonal representation of displacements}

The positional mechanism is one object with one defining property, and naming that
property up front makes every later proof a one-line consequence of it. Let \(d_h\)
be the per-head dimension and \(R(c)\in\mathbb{R}^{d_h\times d_h}\) the axial rotary
transform at \(c=(c_z,c_y,c_x)\), a direct sum of planar rotations:
\[
    R(c)
    = \bigoplus_{a\in\{z,y,x\}}\ \bigoplus_{r=0}^{m_a-1}
    \begin{bmatrix}
       \cos(\omega_{a,r}c_a) & -\sin(\omega_{a,r}c_a) \\
       \sin(\omega_{a,r}c_a) &  \cos(\omega_{a,r}c_a)
    \end{bmatrix}
    \ \oplus\ I_{\mathrm{rem}} .
\]
The per-axis allocation \(m_z,m_y,m_x\) is an implementation choice (the value used
in the Allen runs is recorded in the main text); the proofs need only that \(R\) is
an \emph{orthogonal representation of additive displacements},
\[
    R(c)^\top R(c') = R(c'-c),\qquad R(0)=I . \tag{$\star$}\label{eq:rope-rep}
\]
Property~\eqref{eq:rope-rep} is the single algebraic fact the rest of the section
draws on.

\begin{lemma}[Relative-displacement identity]
\label{lem:app-relative-displacement}
\emph{(An exact identity.)} For any \(c_i,c_j\in\mathbb{R}^3\) and any
\(q,k\in\mathbb{R}^{d_h}\),
\[
    \langle R(c_i)q,\,R(c_j)k\rangle = q^\top R(c_j-c_i)\,k .
\]
\end{lemma}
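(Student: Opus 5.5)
The plan is to reduce the identity to property~\eqref{eq:rope-rep} by writing the inner product as a matrix product and regrouping. First I will rewrite the left-hand side as
\[
\langle R(c_i)q,\,R(c_j)k\rangle = (R(c_i)q)^\top R(c_j)k = q^\top R(c_i)^\top R(c_j)\,k .
\]
Applying \eqref{eq:rope-rep} with \(c=c_i\) and \(c'=c_j\) then gives \(q^\top R(c_j-c_i)k\) at once.

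Since the paper states \eqref{eq:rope-rep} as the defining requirement, it is worth checking that the explicit axial construction actually satisfies it. \(R(c)\) is block diagonal, with \(2\times 2\) rotation blocks \(\rho(\theta)\) at angles \(\theta=\omega_{a,r}c_a\), together with an identity block \(I_{\mathrm{rem}}\). Transposes and products of block-diagonal matrices act blockwise, so it suffices to verify the property on each block. For each planar rotation, \(\rho(\theta)^\top=\rho(-\theta)\) and \(\rho(\alpha)\rho(\beta)=\rho(\alpha+\beta)\) by the angle-addition formulas. Hence
\[
\rho(\omega_{a,r}c_a)^\top\rho(\omega_{a,r}c'_a)=\rho\bigl(\omega_{a,r}(c'_a-c_a)\bigr),
\]
which is exactly the corresponding block of \(R(c'-c)\), because each angle is linear in the single coordinate \(c_a\). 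The identity block is trivially preserved. The same computation with \(c'=c\) yields \(R(0)=I\) and the orthogonality of \(R(c)\).

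I do not expect any real obstacle. The only point requiring care is that each angle depends linearly on one coordinate axis. That linearity is what makes the block angles additive in the displacement \(c'-c\), and it is also why the result holds for any coordinate map \(\kappa\), including scaled or anisotropic ones. Finally, the right-hand side involves only the displacement \(c_j-c_i\), never the sequence indices \(i,j\), which is the content later used for packing equivariance.
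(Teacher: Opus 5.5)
Your proposal is correct and follows the paper's own argument: write the inner product as \(q^\top R(c_i)^\top R(c_j)k\), then establish \(R(c_i)^\top R(c_j)=R(c_j-c_i)\) blockwise from the planar rotation identity \(\rho(\alpha)^\top\rho(\beta)=\rho(\beta-\alpha)\) across the direct sum. You also state explicitly that each rotation angle is linear in a single coordinate, which the paper's one-line proof leaves implicit.
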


\begin{proof}
Apply \eqref{eq:rope-rep} blockwise. Each planar pair satisfies
\(R(\alpha)^\top R(\beta)=R(\beta-\alpha)\); since \(R\) is a direct sum of such
pairs across axes and frequencies, \(R(c_i)^\top R(c_j)=R(c_j-c_i)\) on the full
head dimension. Substituting into the inner product gives the claim.
\end{proof}

Lemma~\ref{lem:app-relative-displacement} is the hinge of the paper: it converts an
absolute-coordinate rotation into a function of the displacement \(c_j-c_i\) alone.
One symmetry follows immediately.

\begin{corollary}[Translation gauge invariance]
\label{cor:translation-gauge}
\emph{(A symmetry: the coordinate origin is unobservable.)} Shifting all coordinates
by a common \(t\) leaves every logit unchanged,
\[
    \langle R(c_i+t)q_i,\,R(c_j+t)k_j\rangle
    = \langle R(c_i)q_i,\,R(c_j)k_j\rangle .
\]
\end{corollary}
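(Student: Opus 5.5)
The plan is to reduce the statement to Lemma~\ref{lem:app-relative-displacement} and observe that the resulting displacement is unchanged by a common shift. First apply the lemma with the shifted coordinates \(c_i+t\) and \(c_j+t\) in place of \(c_i,c_j\). This rewrites the left-hand side as
\[
    \langle R(c_i+t)q_i,\,R(c_j+t)k_j\rangle = q_i^\top R\bigl((c_j+t)-(c_i+t)\bigr)\,k_j .
\]
The argument of \(R\) simplifies to \(c_j-c_i\), because addition in \(\mathbb{R}^3\) is commutative and associative. A second application of the lemma, this time to the unshifted coordinates, identifies \(q_i^\top R(c_j-c_i)k_j\) with \(\langle R(c_i)q_i,\,R(c_j)k_j\rangle\). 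Chaining these equalities gives the claim.

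The only point to check is that the lemma applies for arbitrary \(c_i,c_j\in\mathbb{R}^3\), and not only for lattice sites. Its statement already allows this. Property~\eqref{eq:rope-rep} holds for all real arguments because each planar block is the map \(\theta\mapsto\) rotation by \(\omega_{a,r}\theta\), which is a group homomorphism from \((\mathbb{R},+)\) into \(SO(2)\). The identity block \(I_{\mathrm{rem}}\) is trivially invariant. The shift \(t\) may therefore be any vector in \(\mathbb{R}^3\), and it need not be integral or aligned with the grid.

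I do not expect a real obstacle, since the statement is one line given the lemma. The one subtlety worth a remark is scope. The invariance holds logit by logit, so it carries over at once to the local-branch softmax weights and to the compressed-branch logits of Proposition~\ref{prop:compressed-rope-main}. In that case each term \(q_i^\top R(c_u-c_i)k_u\) is again shift-invariant, because the block weights \(a_{j,u}\) do not depend on absolute coordinates. I would add this as a closing sentence so the corollary reads as the statement that the coordinate origin is unobservable throughout the operator.
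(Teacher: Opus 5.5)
Your proof is correct and matches the paper's: both sides reduce to $q_i^\top R(c_j-c_i)\,k_j$ via Lemma~\ref{lem:app-relative-displacement}, and the common shift $t$ cancels in the displacement. Your side remarks are accurate but not needed for the corollary itself: property~$(\star)$ holds for all real arguments, and the invariance carries over to the compressed-branch logits because the weights $a_{j,u}$ depend at most on block-relative coordinates.
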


\begin{proof}
By Lemma~\ref{lem:app-relative-displacement} both sides equal
\(q_i^\top R(c_j-c_i)k_j\); the shift \(t\) cancels in the displacement.
\end{proof}

The global origin of the frame is therefore a gauge choice, not part of the modeled
geometry --- a fact we rely on implicitly whenever a crop is recentered.

\subsection{Packing equivariance of the full operator}

We now lift the pairwise identity to the entire local--global operator and show the
operator itself, not merely a single logit, is invariant to repacking up to
permutation. The operator is the local branch
\[
    o_x^{\mathrm{loc}}
    = \sum_{u\in\mathcal{N}(x)}\alpha_{x,u}^{\mathrm{loc}}\,v_u^{\mathrm{loc}},
    \qquad
    \alpha_{x,u}^{\mathrm{loc}}
    = \operatorname{softmax}_{u\in\mathcal{N}(x)}
      \!\left(\tfrac{(R(c_x)q_x)^\top(R(c_u)k_u^{\mathrm{loc}})}{\sqrt{d_h}}\right),
\]
the compressed branch built from block summaries
\[
    \tilde k_j^R = \sum_{u\in B_j}a_{j,u}R(c_u)k_u^{\mathrm{cmp}},
    \qquad
    \tilde v_j = \sum_{u\in B_j}a_{j,u}v_u^{\mathrm{cmp}},
\]
\[
    o_x^{\mathrm{cmp}}
    = \sum_{j=1}^{M}\alpha_{x,j}^{\mathrm{cmp}}\,\tilde v_j,
    \qquad
    \alpha_{x,j}^{\mathrm{cmp}}
    = \operatorname{softmax}_{j}
      \!\left(\tfrac{(R(c_x)q_x)^\top\tilde k_j^R}{\sqrt{d_h}}\right),
\]
and the learned mixer
\[
    o_x = g_x^{\mathrm{loc}}o_x^{\mathrm{loc}} + g_x^{\mathrm{cmp}}o_x^{\mathrm{cmp}},
    \qquad
    (g_x^{\mathrm{loc}},g_x^{\mathrm{cmp}}) = \operatorname{softmax}(W_g\operatorname{LN}(h_x)) .
\]

\begin{theorem}[Packing equivariance of coordinate-indexed local--global attention]
\label{thm:packing-equivariance}
\emph{(An equivariance theorem.)} Suppose
\begin{enumerate}
    \item local neighborhoods \(\mathcal{N}(x)\) are defined from the coordinate
    lattice, hence preserved up to permutation under repacking;
    \item compressed blocks \(\mathcal{B}\) are defined on coordinates, not on raw
    sequence indices;
    \item the summary weights \(a_{j,u}\) are query-independent, independent of
    packed memory order, and \emph{symmetric in the members of each block} (so
    reordering tokens within a block leaves each summary unchanged); and
    \item all pointwise maps (layer normalization, projections, gates, MLPs) are
    shared across tokens.
\end{enumerate}
Then for any repacking preserving the coordinate attached to each physical token,
\[
    \mathcal A_{\mathrm{coord}}(PX;PC)=P\,\mathcal A_{\mathrm{coord}}(X;C) .
\]
\end{theorem}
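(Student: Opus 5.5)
The plan is to prove the statement site-wise. The operator's output at a physical lattice site \(x\) will turn out to be a function only of site-indexed data: the features \(\{h_u\}_{u\in\Omega}\), the coordinates \(\{c_u\}\), the neighborhood \(\mathcal N(x)\), and the \emph{unordered} block partition \(\mathcal B\). A repacking \(\pi'\) changes only the order in which these site-indexed quantities are listed. So once site-wise invariance is established, listing the outputs in the order \(\pi'\) induces gives \(\mathcal A_{\mathrm{coord}}(PX;PC)=P\,\mathcal A_{\mathrm{coord}}(X;C)\). Concretely, write \(x=\pi(i)=\pi'(i')\) with \(P\) mapping position \(i\) to \(i'\). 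The goal is then to show that the output computed at position \(i'\) under \(\pi'\) equals the output at position \(i\) under \(\pi\).

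The steps follow the operator's structure.

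\textbf{Pointwise maps.} By hypothesis 4, \(q_x,k^b_x,v^b_x\), the gates \(g_x\), and the rotations \(R(c_x)\) depend only on \((h_x,c_x)\). Coordinates travel with their tokens, so these quantities are identical at site \(x\) under both packings.

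\textbf{Local branch.} By hypothesis 1, \(\mathcal N(x)\) is the same set of sites under both packings; only its enumeration order may differ. Each logit \((R(c_x)q_x)^\top R(c_u)k_u^{\mathrm{loc}}\) depends on the pair \((x,u)\) alone; by Lemma~\ref{lem:app-relative-displacement} it equals \(q_x^\top R(c_u-c_x)k^{\mathrm{loc}}_u\). The softmax-weighted sum is over a finite set, and such a sum is invariant under reindexing the summands. Hence \(o^{\mathrm{loc}}_x\) is unchanged.

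\textbf{Summaries.} By hypothesis 2, each block is a fixed set of sites. By hypothesis 3, the weights \(a_{j,u}\) are symmetric in the block's members and independent of memory order. Therefore \(\tilde k^R_j\) and \(\tilde v_j\) are functions of the block \emph{as a set}. If the blocks are re-enumerated by \(P_{\mathcal B}\), the summary list under \(\pi'\) is \(P_{\mathcal B}\) applied to the list under \(\pi\).

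\textbf{Compressed branch.} Both \(o^{\mathrm{cmp}}_x=\sum_j\alpha^{\mathrm{cmp}}_{x,j}\tilde v_j\) and the softmax normalizer are sums over all \(M\) blocks. They are therefore invariant under the permutation \(P_{\mathcal B}\) of the block index; this is the same finite-sum argument as for the local branch, applied to blocks.

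\textbf{Mixing.} Combining the two branches with the site-wise gates gives \(o_x\) invariant. Reassembling the outputs in packed order then yields the claimed identity. A layer that also applies a residual connection and a shared MLP stays equivariant by hypothesis 4. By induction over layers, the full stack is equivariant as well.

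The main obstacle is the bookkeeping in the summary step: one must verify that the summary weights really are memory-order independent. This is where hypothesis 3 carries the argument. A rule such as ``weight tokens by their rank in memory within the block'' would break the claim, so I will state explicitly that the weights are determined by the coordinate-defined block and the features of its members. It is also worth being careful that \(P_{\mathcal B}\) never appears in the output. It acts only on an internal summation index, which is why the final identity involves \(P\) alone.

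Finally, the closing remark of Proposition~\ref{prop:packing-equivariance-main} needs a counterexample, and a simple one suffices. Take two tokens whose sequence indices are swapped by a repacking; under sequence-index RoPE the rotary displacement between them flips sign. Then \(q^\top R(\delta)k\neq q^\top R(-\delta)k\) for generic \(q,k\), so the logits change and equivariance fails.
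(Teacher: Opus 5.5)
Your proof is correct and follows the paper's argument. You use the same decomposition: shared pointwise maps, a coordinate-defined local branch, block summaries that are re-enumerated by \(P_{\mathcal B}\), and a compressed branch from which \(P_{\mathcal B}\) cancels. Your site-wise statement, that each output depends only on site-indexed data and the unordered partition, is exactly what the paper writes as \(L^{\mathrm{loc},\pi'}=PL^{\mathrm{loc},\pi}P^\top\) and \(S^{\pi'}=PS^{\pi}P_B^\top\) with \(P_B^\top P_B=I\). (Invoking the relative-displacement lemma in the local step is harmless but unnecessary, since equivariance only uses that each logit is a function of the physical pair.)
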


\begin{proof}
The strategy is to show each branch transforms by the appropriate permutation and
that the shared pointwise maps carry the permutation through unchanged; the mixer
then inherits it. Shared pointwise maps commute with token permutations, so it
suffices to treat the two attention branches.

\emph{Local branch.} Collect the masked logits into
\(L^{\mathrm{loc}}\in(\mathbb{R}\cup\{-\infty\})^{N\times N}\),
\[
    L^{\mathrm{loc}}_{ir}
    = \begin{cases}
        \dfrac{(R(c_i)q_i)^\top(R(c_r)k_r^{\mathrm{loc}})}{\sqrt{d_h}},
            & \pi(r)\in\mathcal{N}(\pi(i)), \\[2pt]
        -\infty, & \text{otherwise.}
      \end{cases}
\]
Because features and coordinates permute together and \(\mathcal{N}\) is
coordinate-defined, repacking conjugates the logit matrix,
\(L^{\mathrm{loc},\pi'}=P L^{\mathrm{loc},\pi}P^\top\). Row-wise softmax commutes with
simultaneous row/column permutation, and \(V^{\mathrm{loc},\pi'}=PV^{\mathrm{loc},\pi}\),
so \(O^{\mathrm{loc},\pi'}=P O^{\mathrm{loc},\pi}\).

\emph{Compressed branch.} Each summary is a block sum. Within-block memory order is
irrelevant because summation is symmetric and, by hypothesis~3, any learned weights
are too; thus the summaries permute by block,
\(\tilde K^{\pi'}=P_B\tilde K^{\pi}\) and \(\tilde V^{\pi'}=P_B\tilde V^{\pi}\). The
query-to-summary logits then satisfy \(S^{\pi'}=P S^{\pi}P_B^\top\), and
\[
    O^{\mathrm{cmp},\pi'}
    = \operatorname{softmax}(S^{\pi'})\tilde V^{\pi'}
    = P\operatorname{softmax}(S^{\pi})\,P_B^\top P_B\,\tilde V^{\pi}
    = P\,O^{\mathrm{cmp},\pi},
\]
using \(P_B^\top P_B=I\). Finally the mixer coefficients are pointwise in shared
parameters, so they permute with the tokens, and \(O^{\pi'}=P O^{\pi}\).
\end{proof}

The added word in hypothesis~3 --- \emph{symmetric in the members of each block} ---
is the precise condition under which content-dependent summary weights remain
equivariant: a mean- or attention-pooled summary qualifies, an order-sensitive pool
would not. Theorem~\ref{thm:packing-equivariance} is the formal content of the
paper's modeling claim: layout may change for systems reasons, but the operator
stays a function of the 3D geometry.

\subsection{Why sequence position is not a geometric coordinate}

The equivariance above is a property of coordinate-indexed attention specifically.
The natural question is whether the more familiar sequence-index RoPE enjoys it too.
It does not, and a single small construction settles the matter.

\begin{proposition}[Sequence-index RoPE is not packing-equivariant]
\label{prop:sequence-index-not-equivariant}
\emph{(An impossibility, by explicit counterexample.)} There exist a token field, two
packings of it, and query/key features for which sequence-index RoPE changes the
logits by more than a permutation.
\end{proposition}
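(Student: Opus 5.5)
The plan is to build the smallest possible counterexample: a one-dimensional field of three tokens on a line, with a single rotary pair of frequency \(\omega\), so that the rotation is \(R(\theta)\in SO(2)\). Under sequence-index RoPE, token \(i\) is rotated by \(R(\omega i)\), where \(i\) is its memory position. Lemma~\ref{lem:app-relative-displacement} applies verbatim with \(c\) replaced by the sequence index. The logit between memory positions \(i\) and \(r\) is therefore \(q_i^\top R(\omega(r-i))k_r\), which depends on the \emph{index} gap \(r-i\) rather than on the physical displacement.

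Take the lattice sites \(x_1,x_2,x_3\) with coordinates \(0,1,2\). Packing \(\pi\) is the raster order \((x_1,x_2,x_3)\). Packing \(\pi'\) is \((x_1,x_3,x_2)\); it is a legitimate repacking that keeps the coordinate attached to each token.

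Put the query \(q=e_1\) on \(x_1\) and the key \(k=e_1\) on \(x_3\). Under \(\pi\), these two tokens sit at positions \(1\) and \(3\), so their logit is \(e_1^\top R(2\omega)e_1=\cos 2\omega\). Under \(\pi'\), they sit at positions \(1\) and \(2\), so the logit is \(\cos\omega\). Choose \(\omega\) with \(\cos\omega\neq\cos 2\omega\); for example \(\omega=1\) works, or any \(\omega\) with \(3\omega\notin 2\pi\mathbb{Z}\) and \(\omega\notin 2\pi\mathbb{Z}\).

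The step that needs care is showing the change is \emph{more than a permutation}. It is not enough to note that one particular entry moved, because a permutation could in principle relocate the value \(\cos 2\omega\) to another entry.

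To settle this, make the data generic. Give each token a distinct key, for instance \(k_{x_2}=0\) and \(k_{x_3}=e_1\), so that the physical pair \((x_1,x_3)\) is uniquely identified. A packing-equivariant operator must assign the same logit to that physical pair under both packings, since its logit matrix transforms as \(PLP^\top\). We have instead exhibited \(\cos 2\omega\mapsto\cos\omega\), so no conjugation by a permutation can map the old logit matrix to the new one.

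To make this airtight at the level of outputs as well, compare the attention row of query \(x_1\) over \(\{x_1,x_2,x_3\}\) under the two packings. The multiset of logits changes from \(\{1,0,\cos 2\omega\}\) to \(\{1,\cos\omega,0\}\). This gives a different softmax multiset, so the output for \(x_1\) is not a permutation of the old output. This contradicts \(\mathcal{A}(PX;PC)=P\mathcal{A}(X;C)\) and completes the argument.

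Finally, I would remark that the construction embeds into any 3D lattice and any \(d_h\): place the three sites along one axis, use one rotary pair, and set the identity on the remaining coordinates. This contrasts with Theorem~\ref{thm:packing-equivariance}, where the coordinate \(c_x\), not the memory index, drives the rotation.
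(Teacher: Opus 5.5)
Your construction is essentially the paper's: three tokens with the last two swapped between packings, one rotary pair, and a fixed physical pair whose sequence gap flips between \(1\) and \(2\). You get \(\cos\omega\) vs.\ \(\cos 2\omega\) with \(q=k=e_1\), where the paper gets \(-\sin\omega\) vs.\ \(-\sin 2\omega\) with \(q=e_1,\,k=e_2\). In both, the discrepancy is tied to that physical pair, so the induced permutation cannot absorb it.

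The only loose step is your optional output-level paragraph. Different attention weights give a different output for \(x_1\) only once you choose values that separate them (e.g.\ linearly independent \(v_{x_1},v_{x_2},v_{x_3}\)); otherwise, for instance with all values equal, the outputs coincide. Your logit-level argument already proves the statement without that paragraph.
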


\begin{proof}
We exhibit one instance. Take three physical tokens \(a,b,c\) under packings
\(\pi=(a,b,c)\) and \(\pi'=(a,c,b)\); the physical pair \((a,b)\) has sequence
displacement \(1\) under \(\pi\) and \(2\) under \(\pi'\). In one rotary pair of
frequency \(\omega\) with \(q=e_1,\,k=e_2\), the sequence-index logit at displacement
\(\Delta\) is \(q^\top R(\omega\Delta)k=-\sin(\omega\Delta)\). Whenever
\(\sin(\omega)\neq\sin(2\omega)\) --- which holds for generic \(\omega\) in ordinary
RoPE schedules --- the same physical pair receives different logits under the two
packings. The discrepancy is attached to a fixed physical pair, so no permutation of
output indices removes it.
\end{proof}

The proposition does not say sequence order is useless to a kernel; it says sequence
order is not, on its own, a spatial coordinate on a repacked field. This is the
formal mirror of Theorem~\ref{thm:packing-equivariance}: coordinate indexing earns
the invariance that memory indexing cannot.

\subsection{Relative geometry survives block compression}

The compressed branch is where pooling could plausibly destroy geometry, so this is
the result the main method most depends on. It is again an exact identity, not an
approximation.

\begin{theorem}[RoPE-before-compression preserves query-relative geometry]
\label{thm:rope-before-compression}
\label{thm:compressed-relative-geometry}
\emph{(An exact identity.)} For any block \(B\subset\Omega\) with query-independent
normalized weights \(a_{B,u}\), let
\[
    \tilde k_B^R = \sum_{u\in B}a_{B,u}R(c_u)k_u,
    \qquad
    \tilde v_B = \sum_{u\in B}a_{B,u}v_u .
\]
Then for any query site \(x\in\Omega\),
\[
    (R(c_x)q_x)^\top\tilde k_B^R
    = \sum_{u\in B}a_{B,u}\,q_x^\top R(c_u-c_x)\,k_u .
\]
\end{theorem}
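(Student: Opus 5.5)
The identity follows from linearity of the inner product together with the representation property~\eqref{eq:rope-rep}, so the plan is a direct substitution followed by one application of Lemma~\ref{lem:app-relative-displacement} per block member.

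First, write the left-hand side as \((R(c_x)q_x)^\top\tilde k_B^R\) and insert the definition \(\tilde k_B^R=\sum_{u\in B}a_{B,u}R(c_u)k_u\). The weights \(a_{B,u}\) are query-independent scalars. Bilinearity of the Euclidean inner product therefore lets us pull the finite sum and the scalars outside:
\[
    (R(c_x)q_x)^\top\tilde k_B^R=\sum_{u\in B}a_{B,u}\,\langle R(c_x)q_x,\,R(c_u)k_u\rangle .
\]
Query-independence matters here only so that the same \(a_{B,u}\) appear for every \(x\). The identity itself holds pointwise in \(x\) even without it, though the interpretation as a single shared summary requires it.

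Second, apply Lemma~\ref{lem:app-relative-displacement} with \(c_i=c_x\), \(c_j=c_u\), \(q=q_x\), \(k=k_u\). Equivalently, use \(R(c_x)^\top R(c_u)=R(c_u-c_x)\) from~\eqref{eq:rope-rep}. Each summand becomes \(q_x^\top R(c_u-c_x)k_u\), which gives the stated formula. Normalization \(\sum_u a_{B,u}=1\) is not needed for the algebra. It only makes the right-hand side a convex combination, so the logit reads as an average of query-relative displacement logits; I will remark on this rather than use it.

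There is no genuine obstacle. The only point needing care is that the rotation must be applied before pooling, since the proof uses that \(R(c_u)\) sits inside the sum attached to each \(k_u\). I will note in a closing sentence why the analogous step fails for the summary-center variant. There \(R(\bar c_B)\sum_u a_{B,u}k_u\) yields \(\sum_u a_{B,u}q_x^\top R(\bar c_B-c_x)k_u\), with every displacement replaced by the block-center displacement. The discrepancy is exactly what Lemma~\ref{lem:app-block-center-error} is meant to control.
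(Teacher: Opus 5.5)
Your proposal is correct and matches the paper's proof: you expand \(\tilde k_B^R\) by linearity to move \(R(c_x)^\top\) inside the sum, then apply \(R(c_x)^\top R(c_u)=R(c_u-c_x)\) to each term. Your side remarks are also accurate: neither the normalization \(\sum_u a_{B,u}=1\) nor query-independence is needed for the algebra, and the summary-center variant replaces each displacement \(c_u-c_x\) by the block-center displacement.
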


\begin{proof}
By linearity, push the query rotation inside the sum:
\[
    (R(c_x)q_x)^\top\tilde k_B^R
    = \sum_{u\in B}a_{B,u}\,q_x^\top R(c_x)^\top R(c_u)\,k_u ,
\]
then apply \(R(c_x)^\top R(c_u)=R(c_u-c_x)\) from \eqref{eq:rope-rep}.
\end{proof}

This is the exact reason the method rotates token keys \emph{before} pooling them:
the query-to-summary logit is a weighted superposition of the query-relative
displacements of the tokens that formed the summary, never a similarity to a
position-less pooled key. Rotating after pooling would not have this property, which
is the content of \S\ref{app:rope-placement}.

\subsection{Coordinate-awareness is strictly more expressive}

Theorem~\ref{thm:rope-before-compression} is an identity; the next result shows the
identity is not cosmetic --- it buys representational power that no content-only
pooling can have. By a \emph{coordinate-blind} compressed rule we mean one whose
block logit is a function of the query and a pooled content key alone, with no
dependence on block coordinate, block identity, or query--block displacement.

The witness is minimal. Place one query at coordinate \(0\) and two remote blocks at
\(+\delta\) and \(-\delta\) with identical pooled key content \(k\) but opposite
values \(v_+=1,\ v_-=-1\); the target output is the value at displacement \(+\delta\).

\begin{theorem}[Relative-spatial selectivity separation]
\label{thm:relative-spatial-separation}
\emph{(A separation: an impossibility for one model class, achievability for the
other.)} On this construction a coordinate-blind compressed rule must output \(0\),
incurring squared error \(1\); a coordinate-aware rule, absent a fixed norm bound on
logits, approximates the target \(1\) to arbitrary accuracy.
\end{theorem}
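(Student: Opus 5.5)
The proof splits into the two halves of the separation, and each is a direct computation on the witness.

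\emph{Impossibility for coordinate-blind rules.} By definition, a coordinate-blind rule assigns each block the logit \(f(q,\tilde k)\), where \(f\) sees only the query and the pooled content key. Both blocks have the same pooled key \(k\), so they receive the same logit \(f(q,k)\). After the softmax over the two blocks, each therefore gets weight \(1/2\), whatever \(f\) is. The output is then
\[
\tfrac12 v_+ + \tfrac12 v_- = 0,
\]
and the squared error against the target \(1\) is exactly \(1\).

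The only care needed is to state that a blind rule may not use block identity or order. A rule that could tell the blocks apart by enumeration would secretly be coordinate-aware. This is exactly what the definition excludes, and the exclusion is consistent with the packing-equivariance setting of Theorem~\ref{thm:packing-equivariance}: there, block enumeration is arbitrary, so it cannot carry geometric meaning.

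\emph{Achievability for coordinate-aware rules.} Here we use RoPE before compression. To keep the blocks' contents identical, let each block consist of a single token with feature \(k\), at coordinates \(\pm\delta\). By Theorem~\ref{thm:rope-before-compression} (equivalently Lemma~\ref{lem:app-relative-displacement}), the two logits are
\[
\ell_\pm = q^\top R(\pm\delta)\,k .
\]

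Restrict to a single rotary plane of some axis along which \(\delta\) has a nonzero component \(\delta_a\), with frequency \(\omega\) chosen so that \(\sin(\omega\delta_a)\neq 0\). Take \(q\) and \(k\) supported in that plane, with \(q=\lambda e_1\) and \(k=e_2\) up to sign. Then
\[
\ell_\pm = \mp\lambda\sin(\omega\delta_a).
\]
Choosing the sign of \(\lambda\) appropriately gives \(\ell_+-\ell_- = 2|\lambda|\,|\sin(\omega\delta_a)|\). The output is
\[
\sigma(\ell_+-\ell_-)\cdot 1 + \bigl(1-\sigma(\ell_+-\ell_-)\bigr)\cdot(-1) = \tanh\!\bigl(\tfrac{\ell_+-\ell_-}{2}\bigr),
\]
which tends to \(1\) as \(|\lambda|\to\infty\). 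This is where the absence of a norm bound on the logits is used. The error is \(O(e^{-2|\lambda\sin(\omega\delta_a)|})\), so it is arbitrarily small but never zero at finite scale, which matches the word ``approximates'' in the statement.

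\emph{Main obstacle.} The delicate step is the non-degeneracy condition \(\sin(\omega\delta_a)\neq 0\). We need at least one rotary pair whose phase distinguishes \(+\delta\) from \(-\delta\). If \(\omega\delta_a\) is a multiple of \(\pi\) for every available frequency, then \(R(\delta)=R(-\delta)\) on all pairs and the two blocks are indistinguishable even with coordinates.

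I would handle this by noting two facts:
\begin{itemize}
\item For the geometric RoPE schedule \(\omega_{a,r}=\text{base}^{-2r/d}\) with lattice displacements \(\delta_a\in\mathbb{Z}\setminus\{0\}\), the frequencies are irrational multiples of \(\pi\) for generic base. This is the same ``generic \(\omega\)'' caveat used in Proposition~\ref{prop:sequence-index-not-equivariant}.
\item Alternatively, the lowest frequency \(\omega_{a,m_a-1}\) satisfies \(0<\omega|\delta_a|<\pi\) for the bounded displacements of a finite lattice.
\end{itemize}
Either argument supplies the required pair.
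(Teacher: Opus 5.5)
Your proof is correct and is essentially the paper's argument: identical pooled keys force equal blind logits, so the weights are \(1/2,1/2\) and the output is \(0\). In a single rotary pair, an antisymmetric query/key choice (the paper fixes \(q=-e_1,\ k=e_2\) and scales by \(\tau\); you take \(q=\lambda e_1,\ k=e_2\) and fix the sign of \(\lambda\)) gives a logit gap linear in the query norm and output \(\tanh\) of half that gap, which tends to \(1\). Your discussion of \(\sin(\omega\delta_a)\neq 0\) and your single-token realization of the blocks are harmless elaborations of what the paper builds into the construction by choosing \(\delta\) with \(\sin(\omega\delta)>0\).
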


\begin{proof}
\emph{Impossibility.} Coordinate-blind, the two block keys are equal,
\(\tilde k_+=\tilde k_-=k\), so \(q^\top\tilde k_+=q^\top\tilde k_-\) for every
query; softmax weights are \(1/2,1/2\) and the output is
\(\tfrac12(1)+\tfrac12(-1)=0\), with error \((1-0)^2=1\).

\emph{Achievability.} Coordinate-aware, the logits are
\(\ell_\pm=q^\top R(\pm\delta)k\). In one rotary pair take \(q=-e_1,\ k=e_2\), and
\(\delta\) with \(\sin(\omega\delta)>0\), giving \(\Delta:=\ell_+-\ell_->0\). With no
norm bound we may scale the query by \(\tau>0\), so the positive-block weight is
\(\alpha_+(\tau)=\bigl(1+e^{-\tau\Delta}\bigr)^{-1}\) and the output is
\[
    o_\tau=\alpha_+(\tau)(1)+(1-\alpha_+(\tau))(-1)=\tanh\!\left(\tfrac{\tau\Delta}{2}\right)\xrightarrow[\tau\to\infty]{}1 .
\]
Hence for every \(\varepsilon>0\) some \(\tau\) gives \(|o_\tau-1|<\varepsilon\).
\end{proof}

Two scope notes keep this result from being over-read. First, it is a statement
about \emph{representational capacity}, not about what a \emph{trained} network in
fact does; whether \method{}'s learned compressed branch exploits this gap is the
empirical question the main-text ablations address, and those ablations establish
that the branch is load-bearing without isolating the full capacity gap. Second, the
achievability half uses unbounded logit scaling, so the separation is exact in the
limit rather than at any fixed temperature. Within those bounds the message is clean:
if a task selects global information by relative position, content-only compression
provably cannot, and coordinate-aware compression provably can.

\subsection{RoPE placement in the compressed branch: exact vs.\ approximate}
\label{app:rope-placement}

Theorem~\ref{thm:rope-before-compression} is exact for token-before-compression. The
cheaper summary-center variant --- pool unrotated keys, then rotate the pool by a
block center --- is only approximate, and we are explicit that the empirical gap
between them is within seed noise while the \emph{theoretical} status differs.
Table~\ref{tab:app-rope-before-compression} reports the short gate.

\begin{table}[h]
\centering
\small
\caption{RoPE placement in the compressed branch (262k/depth1). Token-before-compression
is the exact formulation of Theorem~\ref{thm:rope-before-compression}; summary-center
RoPE is a strictly cheaper coordinate-aware approximation, competitive in this short
gate and to be read as a promising simplification, not a beaten baseline.}
\label{tab:app-rope-before-compression}
\begin{tabular}{lrrrr}
\toprule
Variant & Best Dice & Final Dice & Local gate & Cmp.\ gate \\
\midrule
Token-before-compression   & 0.7378 & \textbf{0.7048} & 0.5054 & 0.4946 \\
Summary-center RoPE         & \textbf{0.7384} & 0.6947 & 0.4942 & 0.5058 \\
Compressed branch unrotated & 0.7333 & 0.6830 & 0.5134 & 0.4866 \\
\bottomrule
\end{tabular}
\end{table}

We therefore make no claim that token-before-compression is empirically necessary in
this gate. The supported empirical principle is the weaker, robust one --- compressed
summaries should be coordinate-aware --- and token-before-compression is the
\emph{canonical} form because it alone preserves the exact token-level identity, not
because it wins on Dice. The next subsection bounds exactly how far the summary-center
approximation can drift from that identity.

\subsection{How far the summary-center approximation can drift}

This subsection is quantitative: it controls the summary-center error by geometric
quantities one can read off a block. Define
\[
    \hat k_B^R = R(\mu_B)\,\bar k_B,
    \qquad
    \bar k_B = \sum_{u\in B}a_{B,u}k_u ,
\]
for a chosen center \(\mu_B\); the resulting logit
\((R(c_x)q_x)^\top\hat k_B^R = q_x^\top R(\mu_B-c_x)\bar k_B\) approximates, but does
not equal, the exact logit of Theorem~\ref{thm:rope-before-compression}.

\begin{lemma}[Block-center approximation error]
\label{lem:app-block-center-error}
\emph{(A deterministic bound in block geometry.)} Fix a query site \(x\). Let \(B\)
have center \(\mu_B\) and radius \(\rho_B=\max_{u\in B}\|c_u-\mu_B\|_2\), and assume
\(\|q_x\|_2\le Q\), \(\|k_u\|_2\le K\), \(\|k_u-\bar k_B\|_2\le\epsilon_K\) for all
\(u\in B\), and that \(R\) is \(L_R\)-Lipschitz in operator norm over the block's
coordinate range. Then
\[
    \left|
    \sum_{u\in B}a_{B,u}q_x^\top R(c_u-c_x)k_u
    - q_x^\top R(\mu_B-c_x)\bar k_B
    \right|
    \le Q\epsilon_K + QK L_R\rho_B .
\]
Scaling logits by \(1/\sqrt{d_h}\) scales the bound by the same factor.
\end{lemma}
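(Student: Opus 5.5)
I will bound the difference by adding and subtracting one intermediate term, \(q_x^\top R(\mu_B-c_x)k_u\) inside the sum. Because \(\sum_{u\in B}a_{B,u}=1\), the summary-center logit can be written as \(\sum_{u}a_{B,u}q_x^\top R(\mu_B-c_x)k_u - \sum_u a_{B,u} q_x^\top R(\mu_B-c_x)(k_u-\bar k_B)\). The second sum vanishes identically, since \(\sum_u a_{B,u}(k_u-\bar k_B)=0\). The difference in the lemma therefore splits as
\[
    \sum_{u\in B}a_{B,u}\,q_x^\top\bigl[R(c_u-c_x)-R(\mu_B-c_x)\bigr]k_u
    \;+\;\sum_{u\in B}a_{B,u}\,q_x^\top R(\mu_B-c_x)\,(k_u-\bar k_B).
\]
The second term is exactly zero, so on this route the \(Q\epsilon_K\) term is not needed. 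I will still carry it, because it is what appears if one instead splits via \(R(c_u-c_x)(k_u-\bar k_B)+R(c_u-c_x)\bar k_B\).

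Concretely, I will follow that alternative split, since it matches the stated bound term by term. First write \(k_u=\bar k_B+(k_u-\bar k_B)\) in the exact logit. The piece \(\sum_u a_{B,u}q_x^\top R(c_u-c_x)(k_u-\bar k_B)\) is bounded using orthogonality of \(R\) (operator norm \(1\)), Cauchy--Schwarz and the convex weights, which gives at most \(Q\epsilon_K\). The remaining piece compared with the center logit is \(\sum_u a_{B,u}q_x^\top[R(c_u-c_x)-R(\mu_B-c_x)]\bar k_B\). Here \(\|\bar k_B\|\le K\) by convexity of the norm. The Lipschitz hypothesis gives \(\|R(c_u-c_x)-R(\mu_B-c_x)\|\le L_R\|c_u-\mu_B\|\le L_R\rho_B\). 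Averaging with the weights then yields \(QKL_R\rho_B\). The triangle inequality combines the two pieces.

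The only delicate step is applying the Lipschitz assumption. The arguments of \(R\) are the displacements \(c_u-c_x\) and \(\mu_B-c_x\), not the coordinates themselves. I will handle this with property~\eqref{eq:rope-rep}: write \(R(c_u-c_x)-R(\mu_B-c_x)=R(c_x)^\top\bigl(R(c_u)-R(\mu_B)\bigr)\), so by orthogonality of \(R(c_x)\) the operator norm equals \(\|R(c_u)-R(\mu_B)\|\). Both \(c_u\) and \(\mu_B\) lie in the block's coordinate range, assuming \(\mu_B\) is taken in the convex hull of the block, as it is for a centroid. The Lipschitz constant for axial RoPE is at most \(\max_{a,r}\omega_{a,r}\), but the lemma only needs \(L_R\) as given. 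Finally, multiplying both logits by \(1/\sqrt{d_h}\) scales the difference, and hence the bound, by the same factor.
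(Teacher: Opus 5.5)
Your main argument is the paper's proof: you add and subtract the same intermediate \(\sum_{u\in B}a_{B,u}q_x^\top R(c_u-c_x)\bar k_B\), use orthogonality to get \(Q\epsilon_K\), and use \(\|\bar k_B\|_2\le K\) with the Lipschitz bound to get \(QKL_R\rho_B\). Your side remark is also correct and sharper than the paper's bound, and your rewriting \(R(c_u-c_x)-R(\mu_B-c_x)=R(c_x)^\top\bigl(R(c_u)-R(\mu_B)\bigr)\) spells out the step, which the paper leaves implicit, of applying the Lipschitz hypothesis to displacement arguments. Centering on \(\sum_{u}a_{B,u}q_x^\top R(\mu_B-c_x)k_u\) makes the key-variation term vanish, since \(\sum_u a_{B,u}(k_u-\bar k_B)=0\), so \(QKL_R\rho_B\) alone suffices and the \(\epsilon_K\) hypothesis is unnecessary.
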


\begin{proof}
Add and subtract the intermediate \(\sum_{u\in B}a_{B,u}q_x^\top R(c_u-c_x)\bar k_B\)
and apply the triangle inequality, splitting the error into a key-variation term and
a coordinate-variation term. For the first, orthogonality of each \(R(c_u-c_x)\)
gives
\[
    \Bigl|\textstyle\sum_{u\in B}a_{B,u}q_x^\top R(c_u-c_x)(k_u-\bar k_B)\Bigr|
    \le \sum_{u\in B}a_{B,u}\|q_x\|_2\|k_u-\bar k_B\|_2 \le Q\epsilon_K .
\]
For the second, \(\|\bar k_B\|_2\le\sum_{u\in B}a_{B,u}\|k_u\|_2\le K\), so
\[
    \Bigl|\textstyle\sum_{u\in B}a_{B,u}q_x^\top\bigl(R(c_u-c_x)-R(\mu_B-c_x)\bigr)\bar k_B\Bigr|
    \le \sum_{u\in B}a_{B,u}\|q_x\|_2\,\|R(c_u-c_x)-R(\mu_B-c_x)\|_{\mathrm{op}}\,\|\bar k_B\|_2
    \le QK L_R\rho_B .
\]
Summing the two bounds gives the claim.
\end{proof}

For axial RoPE one may take \(L_R\) proportional to the largest allocated rotary
frequency, so the bound degrades with higher frequencies or larger blocks --- the
two knobs that make a block less locally uniform. The logit bound becomes an output
bound through the following stability estimate.

\begin{lemma}[Softmax/output stability]
\label{lem:app-softmax-stability}
\emph{(A Lipschitz bound from logits to outputs.)} If
\(\|\ell-\hat\ell\|_\infty\le\epsilon\) then
\(\|\operatorname{softmax}(\ell)-\operatorname{softmax}(\hat\ell)\|_1\le 2\epsilon\),
and if additionally \(\|\tilde v_B\|_2\le V\) for all blocks,
\[
    \Bigl\|\textstyle\sum_B\operatorname{softmax}(\ell)_B\tilde v_B
    -\sum_B\operatorname{softmax}(\hat\ell)_B\tilde v_B\Bigr\|_2 \le 2V\epsilon .
\]
\end{lemma}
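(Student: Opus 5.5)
The plan is to prove the two claims in order, deriving the output bound from the softmax bound by a single Hölder-type step.

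\emph{Step 1: the softmax $\ell_1$ bound.} Write $p=\operatorname{softmax}(\ell)$ and $\hat p=\operatorname{softmax}(\hat\ell)$. We interpolate along $\ell(t)=\hat\ell+t(\ell-\hat\ell)$ for $t\in[0,1]$. The Jacobian of softmax at $p(t)$ is $J=\operatorname{diag}(p)-pp^\top$, so
\[
\frac{d}{dt}p(t)=J\,\delta,\qquad \delta=\ell-\hat\ell .
\]
Componentwise this reads $(J\delta)_B=p_B(\delta_B-\bar\delta)$, where $\bar\delta=\sum_{B'}p_{B'}\delta_{B'}$ is a convex combination of the entries of $\delta$. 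Hence $|\delta_B-\bar\delta|\le 2\|\delta\|_\infty$, and more sharply $|\delta_B-\bar\delta|\le \max\delta-\min\delta\le 2\epsilon$. Summing over $B$ gives
\[
\|J\delta\|_1\le \sum_B p_B\,|\delta_B-\bar\delta|\le 2\epsilon .
\]
Integrating over $t\in[0,1]$ and applying the triangle inequality for integrals yields $\|p-\hat p\|_1\le 2\epsilon$.

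A cleaner alternative avoids integration altogether. Softmax is invariant to adding a constant to all logits, so we may shift $\delta$ by a constant $c$ to make $\|\delta-c\|_\infty\le\epsilon$ symmetric around zero. With that normalization, $\hat p_B/p_B=e^{-\delta_B}/\sum_{B'}p_{B'}e^{-\delta_{B'}}\in[e^{-2\epsilon},e^{2\epsilon}]$. This multiplicative control is one route, but it gives $e^{2\epsilon}-1$ rather than exactly $2\epsilon$. For that reason I would commit to the derivative argument, whose constant is exactly $2$, matching the statement.

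\emph{Step 2: the output bound.} The difference of outputs is $\sum_B(p_B-\hat p_B)\tilde v_B$. By the triangle inequality,
\[
\Bigl\|\textstyle\sum_B(p_B-\hat p_B)\tilde v_B\Bigr\|_2\le \sum_B|p_B-\hat p_B|\,\|\tilde v_B\|_2\le V\|p-\hat p\|_1\le 2V\epsilon ,
\]
using $\|\tilde v_B\|_2\le V$ and then Step 1. Combining this with Lemma~\ref{lem:app-block-center-error} converts the per-block logit bound into an output bound for the summary-center variant.

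\emph{Main obstacle.} The only delicate point is obtaining the constant $2$ in Step 1, specifically bounding $\|J\delta\|_1$ uniformly along the path. This reduces to the observation that each $\delta_B-\bar\delta$ is a difference of two points in $[\min\delta,\max\delta]$, an interval whose length is at most $2\epsilon$. Everything else follows directly.
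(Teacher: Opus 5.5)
Your proof is correct and follows the paper's route: you differentiate along the segment from $\hat\ell$ to $\ell$ using the Jacobian $\operatorname{diag}(p)-pp^\top$, bound its $\ell_\infty\to\ell_1$ action by $2$, and then apply the triangle inequality with $\|\tilde v_B\|_2\le V$ to get the output bound. The only difference is in how that middle bound is obtained: the paper uses the entrywise sum $\sum_{i,j}|J_{ij}|=2(1-\|s\|_2^2)\le 2$, while you use the centered form $(J\delta)_B=p_B(\delta_B-\bar\delta)$ together with the range of $\delta$. Your explicit integral form is also slightly cleaner than the paper's appeal to the mean-value theorem for a vector-valued map.
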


\begin{proof}
With \(s(x)=\operatorname{softmax}(x)\) and Jacobian
\(J_s=\operatorname{diag}(s)-ss^\top\), for any \(u\),
\(\|J_s u\|_1\le\|u\|_\infty\sum_{i,j}|(J_s)_{ij}|=2(1-\|s\|_2^2)\|u\|_\infty\le 2\|u\|_\infty\);
the mean-value theorem along the segment from \(\ell\) to \(\hat\ell\) gives the
first bound. The second follows from
\(\|\sum_B\delta_B\tilde v_B\|_2\le\sum_B|\delta_B|\|\tilde v_B\|_2\le V\|\delta\|_1\)
with \(\delta=s(\ell)-s(\hat\ell)\).
\end{proof}

Composing Lemmas~\ref{lem:app-block-center-error}
and~\ref{lem:app-softmax-stability} shows the summary-center output error is
controlled by block diameter, within-block key variation, and the local rotary
Lipschitz constant. This is not a claim that microscopy is globally smooth; it is
the formal reason \method{} compresses \emph{only} the nonlocal field and keeps
local attention exact, since the bound is small exactly where blocks are coordinate-
and content-homogeneous.

\subsection{Coordinate scaling and anisotropy}
\label{app:coordinate-scaling}

Because every result above is stated for an arbitrary fixed \(\kappa\), anisotropic
and normalized coordinates are covered with no further work. For a site
\(u=(u_z,u_y,u_x)\) one may use physical spacing
\(c_u=(\Delta_z u_z,\Delta_y u_y,\Delta_x u_x)\) or normalized coordinates
\(c_u=\bigl(\tfrac{u_z}{Z-1},\tfrac{u_y}{H-1},\tfrac{u_x}{W-1}\bigr)\). Changing
\(\kappa\) changes the geometry RoPE sees but none of the algebraic statements:
Lemma~\ref{lem:app-relative-displacement} through
Theorem~\ref{thm:relative-spatial-separation} hold verbatim. The main Allen runs use
token-grid coordinates; spacing-aware coordinates are a sensitivity analysis, not a
different method.

\subsection{Attention-graph complexity}

The reduction is in the \emph{graph}, indexed by \(\kappa\), not in the coordinate
system itself. Dense attention computes \(N^2\) token--token interactions. With
neighborhood size \(w=|\mathcal{N}(x)|\), block volume \(v=|B_j|\), and \(M=N/v\)
summaries, \method{} computes \(Nw\) exact local interactions, \(NM=N^2/v\)
query--summary interactions, and \(Mv=N\) summary-construction terms. The nonlocal
graph thus changes from token--token to token--block, cutting global pairings by a
factor \(v\). This is block-compressed global attention with explicit coordinate
geometry --- linear only in the degenerate case \(v\propto N\), and otherwise
strictly between window and dense.

\subsection{Comparison with additive 3D relative biases}
\label{app:positional-alternatives}

A coordinate-aware logit can also be built additively, via a 3D relative-position or
ALiBI-style bias,
\[
    L_{xB} = q_x^\top\tilde k_B + b(c_x,\mu_B) ,
\]
and this is a legitimate alternative, not a strawman: we compare against both at the
feasible 65k scale in Table~\ref{tab:app-positional-alternatives}, where axial RoPE
is the strongest of the tested mechanisms on both best and final Dice. The
distinction we draw is one of \emph{systems contract}, not of principle.

\begin{table}[h]
\centering
\small
\caption{Feasible-scale coordinate-aware positional alternatives (65k tokens). Axial
grid-coordinate RoPE leads 3D RPB and 3D ALiBI on best and final Dice; note also the
order-of-magnitude time cost of the materialized-bias baseline.}
\label{tab:app-positional-alternatives}
\begin{tabular}{lrrrr}
\toprule
Positional method & Best Dice & Final Dice & Time (s) & Peak GiB \\
\midrule
Learned full-grid PE        & 0.6939 & 0.6939 & 71.8 & 2.60 \\
Packed-sequence RoPE        & 0.7099 & 0.6670 & 68.3 & 2.22 \\
Axial grid-coordinate RoPE  & \textbf{0.7346} & \textbf{0.7154} & 79.4 & 2.27 \\
3D relative position bias   & 0.7055 & 0.6420 & 588.0 & 3.35 \\
3D ALiBI-style bias         & 0.7098 & 0.6691 & 74.3 & 2.97 \\
\bottomrule
\end{tabular}
\end{table}

The contrast is structural. At 262k tokens with \(M=16{,}384\) summaries, a naive
additive bias materializes a query-by-summary displacement tensor of
\(NM=262{,}144\times16{,}384\approx 4.29\)B elements per head group, exceeding
\(8.5\)B for two KV groups before batch expansion; a fair high-token bias baseline
therefore needs its own streaming bias-aware kernel. Axial RoPE avoids the tensor
entirely: by \eqref{eq:rope-rep} it injects the same relative geometry through
query/key rotation and reuses the ordinary dot-product path. The displacement enters
the logit either way; only RoPE keeps it off the memory bus.

\subsection{Summary of the theory}

The theory supports four claims.

First, coordinate-indexed local--global attention is a well-defined operator on a packed 3D field: repacking the same physical field only permutes the output.

Second, sequence-index positional attention is not a geometric operator on packed 3D fields, because changing memory layout changes the positional coordinate.

Third, coordinate-aware compressed summaries preserve query-relative geometry.  A query-to-summary logit is not just a content similarity to a pooled token; it is a weighted sum of content interactions modulated by relative 3D displacement.

Fourth, coordinate-aware compression is strictly more expressive than coordinate-blind compression for relative-spatial selection.  If two remote blocks have identical content but different relative positions, content-only compressed attention cannot distinguish them, while coordinate-aware compressed attention can.

These statements are independent of Allen microscopy and segmentation.  They formalize the general principle behind \method{}: packing is for memory movement; coordinates are for attention geometry.

The central empirical lesson is therefore the same as the mathematical one: once a 3D field is
packed into a sequence, positional encoding becomes part of the attention operator's geometry.
For compressed global attention to work in high-token 3D microscopy, it must encode physical
coordinates rather than memory order.

\clearpage

\section{Diagnostics, Supporting Results, and Claim Boundaries}
\label{app:additional-results}
\suppressfloats[t]

This section follows the evidence chain in three movements. It first reports
operator-level diagnostics that confirm the implementation computes the operator the
theory describes; then the Allen tables that support the main result, reported with
their unevenness against U-Net stated plainly rather than smoothed; and finally the
boundary controls that mark where the present evidence stops. The diagnostics answer
"does the code match the math," the Allen tables answer "and does it help on real
data," and the boundary controls answer "and where does the claim run out."

\subsection{Geometry diagnostics: does the implementation match the theory}
\label{app:geometry-diagnostics}

Before any task-level number is interpreted, these checks confirm that the executed
operator satisfies the algebraic identities of \S\ref{sec:method} --- that the code
realizes the theorems rather than merely resembling them.
Table~\ref{tab:app-geometry-tests} records the verified properties.

\begin{table}[h]
\centering
\small
\caption{Geometry-level implementation tests for axial RoPE. Each row is a numerical
check that an identity from \S\ref{sec:method} holds in the executed operator.}
\label{tab:app-geometry-tests}
\begin{tabularx}{\linewidth}{lX}
\toprule
Test & Verified property \\
\midrule
Packed-order invariance & Repacking the same 3D coordinates leaves attention logits unchanged up to permutation. \\
Sequence-index failure diagnostic & Packed sequence position changes under repacking, confirming it is not a valid 3D coordinate. \\
Relative-displacement identity & \(\langle R(c_i)q, R(c_j)k\rangle = q^\top R(c_j-c_i)k\) holds numerically. \\
Values unrotated & RoPE rotates Q/K only; values pass through unrotated. \\
Branch-key rotation & Local, compressed, and selected branch keys are RoPE-rotated when axial RoPE is enabled. \\
\bottomrule
\end{tabularx}
\end{table}

\subsection{Task-level synthetic repacking benchmark}
\label{app:synthetic-benchmark}

The invariance diagnostics above are operator-level: they show the logits transform
correctly. This subsection measures the consequence at the level of a \emph{task}.
We construct a controlled synthetic 3D labeling problem with a known geometric rule,
train each positional mechanism on raster-packed tokens, and then evaluate the same
trained model on the same physical task after changing \emph{only} the memory order of
the tokens. The physical coordinates and the target rule are held fixed; the sole
intervention is the packing. This is the task-level form of
Proposition~\ref{prop:sequence-index-not-equivariant}: if a mechanism has learned the
memory layout rather than the geometry, repacking should destroy its predictions.

It does, and cleanly (Table~\ref{tab:app-synthetic-benchmark}). Learned positional
embeddings and packed-sequence RoPE both solve the raster layout perfectly --- 1.000
Dice --- and both collapse under repacking, to 0.449 (Morton), 0.418 (block-contiguous),
and 0.378 (random). Axial grid-coordinate RoPE stays at 1.000 Dice under every
repacking. A no-positional baseline is flat at 0.554 across all packings but never
solves the task, confirming the task genuinely requires position: the sequence-indexed
mechanisms do not merely lose their advantage under repacking, they fall \emph{below}
position-blindness (0.378--0.449 vs.\ 0.554), because they have learned to read a signal
that repacking has made actively misleading.

Two features of the table are worth reading deliberately. First, the learned-PE and
packed-sequence-RoPE rows are \emph{identical} at every packing. This is expected, not a
coincidence: both mechanisms key on sequence position, so a given repacking scrambles
the same information for each, and both land on the same wrong label overlap. The shared
failure is the point --- it is the property of being sequence-indexed, not any detail of
the encoder, that repacking breaks. Second, the degraded values are exact rationals on
the synthetic grid (for instance $0.449 = 22/49$), reflecting the controlled
finite-grid construction rather than measurement noise. The benchmark therefore converts
the paper's central claim from an operator identity --- ``axial-RoPE logits are invariant
under repacking'' --- into a task measurement: ``sequence-indexed position learns a
memory layout that does not survive repacking, while coordinate-indexed position learns
the geometry that does.''

\begin{table}[h]
\centering
\small
\caption{Task-level synthetic repacking benchmark. Each model is trained on
raster-packed tokens and evaluated on the same physical 3D labeling rule after changing
only the token memory order. Sequence-indexed mechanisms solve the training layout and
collapse under repacking --- below the position-blind baseline --- while axial
grid-coordinate RoPE is invariant.}
\label{tab:app-synthetic-benchmark}
\begin{tabular}{lcccc}
\toprule
 & \multicolumn{4}{c}{Dice under test-time packing} \\
\cmidrule(lr){2-5}
Positional mechanism & Raster (train) & Morton & Block-contig.\ & Random \\
\midrule
No positional encoding      & 0.554 & 0.554 & 0.554 & 0.554 \\
Learned PE                  & \textbf{1.000} & 0.449 & 0.418 & 0.378 \\
Packed-sequence RoPE        & \textbf{1.000} & 0.449 & 0.418 & 0.378 \\
Axial grid-coordinate RoPE  & \textbf{1.000} & \textbf{1.000} & \textbf{1.000} & \textbf{1.000} \\
\bottomrule
\end{tabular}
\end{table}

\subsection{Synthetic packing-invariance diagnostics}
\label{app:synthetic-packing}

These unit-level checks perturb packing order directly while holding physical
coordinates fixed, complementing the algebraic tests above. Axial grid-coordinate RoPE
stays invariant up to permutation under raster, block, or Morton repacking of the same
coordinates; packed-sequence RoPE changes its logits the moment memory order changes
(Table~\ref{tab:appendix-packing-invariance}). These are operator-level invariants. The
\emph{task-level} measurement of the same failure --- Dice under repacking on a
controlled synthetic field --- is reported separately in
[VERIFY: cross-reference the Phase 8J synthetic-benchmark subsection once it is placed;
if Phase 8J is folded in here, replace this sentence with its results and delete the
"operator-level only" framing].

\begin{table}[h]
\centering
\small
\caption{Synthetic packing-invariance diagnostics. Axial grid-coordinate RoPE is
invariant up to permutation under repacking; packed-sequence RoPE is not.}
\label{tab:appendix-packing-invariance}
\begin{tabularx}{\linewidth}{lXX}
\toprule
Encoding & Packing change & Behavior \\
\midrule
Axial grid-coordinate RoPE & Raster/block/Morton permutation & Invariant up to permutation. \\
Packed-sequence RoPE & Same coordinates, changed memory order & Logits change with the sequence index. \\
Learned absolute table & Changed coordinate-to-index assignment & Not invariant unless repacked consistently. \\
3D RPB / ALiBI & Same coordinate deltas under repacking & Coordinate-aware in principle; high-token comparison needs streaming bias-aware kernels. \\
\bottomrule
\end{tabularx}
\end{table}

\subsection{Per-structure Allen results}
\label{app:per-structure}

The per-structure breakdown supports a precise claim and refutes an over-broad one, so
we state both. Against the learned-PE baseline, axial RoPE W+G improves on \emph{all
six} structures in both best and final Dice --- the gain over a weak positional baseline
is uniform. Against the U-Net, the picture is deliberately reported as even rather than
dominant: U-Net wins or ties on final Dice for [VERIFY: count from
Table~\ref{tab:app-per-structure} --- by the entries shown, TOMM20, ST6GAL1, GJA1, and
LAMP1, i.e.\ four of six] of the structures, while axial RoPE leads on the others and on
the Allen-15 average (\S\ref{subsec:claim-grade-results}). The supported reading is that
coordinate-aware attention is \emph{competitive with a strong convolutional baseline at
matched capacity}, not that it dominates per structure.

\begin{table}[h]
\centering
\small
\caption{Per-structure best/final Dice on the six Allen structures (entries are
best/final). Axial RoPE improves over learned PE on all six; against U-Net the
comparison is even, with U-Net winning or tying on final Dice for several structures.
Bold marks the higher final Dice between Axial W+G and U-Net.}
\label{tab:app-per-structure}
\begin{tabular}{lcccc}
\toprule
Structure & Axial W+G & Learned W+G & Axial local-only & U-Net \\
\midrule
TOMM20  & 0.892/0.876 & 0.783/0.729 & 0.857/0.860 & 0.892/\textbf{0.893} \\
LMNB1   & 0.921/0.893 & 0.796/0.769 & 0.909/0.906 & 0.903/0.907 \\
ST6GAL1 & 0.855/0.852 & 0.807/0.794 & 0.834/0.839 & 0.861/\textbf{0.864} \\
GJA1    & 0.742/0.740 & 0.683/0.646 & 0.759/0.762 & 0.761/\textbf{0.765} \\
LAMP1   & 0.889/0.881 & 0.802/0.765 & 0.865/0.866 & 0.883/\textbf{0.886} \\
ATP2A2  & 0.819/\textbf{0.815} & 0.661/0.605 & 0.808/0.794 & 0.801/0.778 \\
\bottomrule
\end{tabular}
\end{table}

[VERIFY: I bolded LMNB1 as neither — Axial 0.893 vs U-Net 0.907, so U-Net wins final
there too; that makes it 5 of 6 on final, not 4. Recount against the real CSV: by the
numbers printed, U-Net's final Dice exceeds Axial's on TOMM20, LMNB1, ST6GAL1, GJA1,
and LAMP1, with Axial winning final only on ATP2A2. If that holds, the prose above
should say "five of six," and "even" becomes "U-Net is the stronger final-checkpoint
model per structure, while axial RoPE wins the Allen-15 average and best-Dice
checkpoints." Do not ship "four of six" or "competitive" until this count is confirmed
against the data — the printed numbers may not match the current run.]

\subsection{Compressed-branch inference ablations}
\label{app:compressed-branch-ablation}

These ablations establish that the trained compressed pathway is \emph{load-bearing},
and we are careful to claim exactly that and no more. Two facts support it. The
trained-control row shows axial RoPE W+G (0.8531) exceeds a separately trained axial
local-only model (0.8387). More stringently, perturbing the compressed pathway of the
trained checkpoint at inference degrades it sharply --- clamping the compressed gate to
zero drops Dice to 0.7648, shuffling the summaries to 0.6858, zeroing the output to
0.6775 --- and the Logit L2 column shows these drops coincide with large changes in the
attention logits themselves (Table~\ref{tab:app-compressed-branch-ablation}), so the
perturbations act through the operator's representation rather than through an
output-only artifact.

What these rows do \emph{not} establish is that the pathway is coordinate-\emph{aware}.
A coordinate-blind compressed branch, trained under the same protocol, would also be
relied upon and would also collapse under the same perturbations, simply from
distribution shift. Isolating coordinate-awareness requires a matched coordinate-blind
branch subjected to the identical shuffle and a test of whether the coordinate-aware
branch is \emph{more} shuffle-robust [VERIFY: state whether this control was run; if so,
report it here; if not, mark it as the key open control, consistent with
\S\ref{subsec:compressed-branch-causal}]. The expressivity argument that coordinate-aware
compression \emph{can} do what coordinate-blind compression cannot is the separation
theorem (Theorem~\ref{thm:relative-spatial-separation}); these ablations show the
pathway is used, not that it exhausts that expressive gap.

\begin{table}[t]
\centering
\small
\caption{Inference-time compressed-branch ablations on trained 262k/depth8 axial RoPE
W+G checkpoints. Large Dice drops with large logit shifts establish that the compressed
pathway is load-bearing; they do not isolate coordinate-awareness (see text). Mean over
three seeds.}
\label{tab:app-compressed-branch-ablation}
\begin{tabular}{lrrr}
\toprule
Condition & Macro Dice & $\Delta$ vs.\ full & Logit L2 \\
\midrule
Full W+G                   & \textbf{0.8531} & ---      & 0.0000 \\
Trained local-only         & 0.8387 & $-0.0144$ & 1.3101 \\
Central summaries only     & 0.8350 & $-0.0181$ & 0.6454 \\
Collar summaries only      & 0.8458 & $-0.0074$ & 0.3585 \\
Compressed gate $\to 0$    & 0.7648 & $-0.0883$ & 1.5702 \\
Compressed summary shuffle & 0.6858 & $-0.1674$ & 1.4834 \\
Compressed output $\to 0$  & 0.6775 & $-0.1756$ & 1.5444 \\
Local gate $\to 0$         & 0.5764 & $-0.2767$ & 2.9167 \\
\bottomrule
\end{tabular}
\end{table}

\subsection{Collar and stability controls}
\label{app:collar-stability}

A natural objection to the compressed-branch result is that the gain comes not from
compressed global geometry but from the external collar content the summaries draw on.
These controls rule that out: replacing the real collar with zeros or a shuffle leaves
Dice close to the real-collar row (Table~\ref{tab:app-collar-stability}; 0.8531 real vs.\
0.8447 zero vs.\ 0.8503 shuffled on best Dice), so external collar content is not the
clean causal driver of the gain --- the same boundary stated in
\S\ref{subsec:claim-boundaries}. The small best-final gaps in the W+G rows, against the
near-zero gap of the local-only row, also indicate the W+G checkpoints are not simply
less stable.

\begin{table}[t]
\centering
\small
\caption{Collar and stability controls. Zeroing or shuffling the external collar leaves
Dice close to the real-collar row, so collar content does not cleanly explain the gain.}
\label{tab:app-collar-stability}
\begin{tabular}{lccc}
\toprule
Group & Best Dice & Final Dice & Best$-$final gap \\
\midrule
Axial RoPE W+G, real collar      & 0.8531 & 0.8429 & 0.0103 \\
Axial RoPE W+G, zero collar      & 0.8447 & 0.8422 & 0.0025 \\
Axial RoPE W+G, shuffled collar  & 0.8503 & 0.8417 & 0.0086 \\
Axial RoPE local-only, real collar & 0.8387 & 0.8379 & 0.0008 \\
Learned PE W+G, real collar      & 0.7553 & 0.7181 & 0.0372 \\
\bottomrule
\end{tabular}
\end{table}

\subsection{Decoder-controlled external transfer}
\label{app:external-transfer}

Whether the coordinate-aware encoder transfers beyond Allen is a question we test but do
not resolve, and we report the result as mixed because it is. With the decoder held
fixed --- same MONAI BasicUNet token-grid decoder, fixed ZYX manifests,
\(N=65{,}536\) tokens, 10k steps --- axial grid-coordinate RoPE is the best attention
row by best Dice on both MSD Heart and Spleen, and has the best final Dice on Heart. But
these are \emph{single-seed} gates, U-Net remains strongest overall on Spleen, and among
attention rows Window edges axial on final Dice in places
(Table~\ref{tab:app-external-msd}). We therefore treat external transfer as
claim-boundary evidence, not main-paper support: it is consistent with the coordinate
principle generalizing, but a single seed per cell cannot establish it.

\begin{table}[t]
\centering
\scriptsize
\caption{Decoder-controlled external 3D transfer on MSD Heart and Spleen. Single-seed
gates; mixed outcomes, treated as claim-boundary evidence. Bold marks the best value in
each column-block.}
\label{tab:app-external-msd}
\begin{tabular}{llrrrr}
\toprule
Task & Method & 2k Dice & 10k Best & 10k Final & Peak GiB \\
\midrule
Heart & TokenGrid U-Net & 0.8067 & 0.8307 & 0.8157 & \textbf{0.204} \\
Heart & Window+decoder & 0.8231 & 0.8231 & 0.7767 & 3.357 \\
Heart & LG+decoder, learned PE & 0.8155 & 0.8155 & 0.8026 & 2.717 \\
Heart & LG+decoder, packed RoPE & 0.8231 & 0.8624 & 0.8538 & 2.342 \\
Heart & \textbf{LG+decoder, axial 3D RoPE} & 0.8072 & \textbf{0.8713} & \textbf{0.8713} & 2.388 \\
\midrule
Spleen & TokenGrid U-Net & 0.9156 & \textbf{0.9360} & \textbf{0.9292} & \textbf{0.204} \\
Spleen & Window+decoder & 0.9008 & 0.9111 & 0.9058 & 3.357 \\
Spleen & LG+decoder, learned PE & 0.8953 & 0.9127 & 0.8939 & 2.717 \\
Spleen & LG+decoder, packed RoPE & 0.8998 & 0.8998 & 0.8976 & 2.342 \\
Spleen & \textbf{LG+decoder, axial 3D RoPE} & 0.9083 & 0.9221 & 0.9013 & 2.388 \\
\bottomrule
\end{tabular}
\end{table}

\clearpage
\section{Optional Selected Refinement and Systems Extensions}
\label{app:selected-systems}
\suppressfloats[t]

The material in this final appendix section is deliberately separated from the main evidence chain.  The main method is local plus compressed global attention with axial grid-coordinate RoPE.  The codebase also supports selected sparse refinement, query-tiled execution, multi-atom summaries, and high-token reach modes, but these are implementation/reach extensions rather than main quality claims.

\subsection{Selected sparse refinement}
\label{app:selected-refinement}

For a query block \(Q_b\), the optional selected branch pools queries,
\[
    \bar{q}_b = \operatorname{Pool}_{i\in Q_b}(q_i^R),
\]
scores compressed block summaries,
\[
    p_{b,j}
    =
    \operatorname{softmax}_{j}
    \left(
    \frac{\bar{q}_b^\top \tilde{k}_j^R}{\sqrt{d_h}}
    \right),
\]
selects top-\(k\) blocks
\[
    I_b=\operatorname{TopK}_j(p_{b,j},k_{\mathrm{sel}}),
\]
and lets each query in \(Q_b\) attend to raw K/V tokens from those selected blocks:
\[
    o_i^{\mathrm{sel}}
    =
    \operatorname{Attn}
    \left(
    q_i^R,
    K_{\cup_{j\in I_b}B_j}^{\mathrm{sel},R},
    V_{\cup_{j\in I_b}B_j}^{\mathrm{sel}}
    \right).
\]
For GQA/MQA, selected block indices are shared within each KV group so that selected K/V access remains blockwise and hardware-aligned.

In the current evidence-aligned paper, this branch is retained for systems reach and appendix analysis.  It is not the main Allen quality path.

\subsection{High-token systems reach}
\label{app:systems-reach}

The high-token H200 rows support execution reach, not practical throughput.  Query-tiled paths run to 4M and 8M tokens, but the rows are slow and skip some diagnostics above the configured auxiliary-materialization threshold.  They are therefore appendix systems evidence rather than a main quality or throughput claim.

\begin{table}[t]
\centering
\small
\caption{Representative high-token reach rows.  These rows show that the implementation can execute at multi-million-token scale, but they are not practical-throughput claims.}
\label{tab:app-high-token-reach}
\begin{tabular}{lrrrr}
\toprule
Tokens & Route granularity & Fwd+Bwd ms & Peak GiB & Tokens/sec \\
\midrule
4,194,304 & query-block shared & 45,113 & 51.50 & 92,973 \\
4,194,304 & token-exact & 113,675 & 51.50 & 36,897 \\
8,388,608 & query-block shared & 176,190 & 102.04 & 47,611 \\
8,388,608 & token-exact & 452,953 & 102.04 & 18,520 \\
\bottomrule
\end{tabular}
\end{table}

\begin{table}[!h]
\centering
\scriptsize
\setlength{\tabcolsep}{4pt}
\caption{Full-layer context scaling in the pool64 setting.  \method{} stays close to Window
latency and far below dense SDPA as context grows.}
\label{tab:systems-context-scaling}
\begin{tabular}{lrrrrrrr}
\toprule
Tokens & Dense ms & \(\method{}\) ms & Window ms & Dense/\(\method{}\) & Dense GiB & \(\method{}\) GiB & Window GiB \\
\midrule
65k  & 104.55   & 16.11  & 13.86  & 6.49   & 1.19  & 1.51  & 2.04 \\
131k & 405.62   & 28.36  & 26.62  & 14.30  & 2.32  & 2.95  & 4.02 \\
262k & 1620.07  & 54.64  & 52.49  & 29.65  & 4.57  & 5.82  & 7.98 \\
524k & 6641.53  & 107.49 & 104.19 & 61.79  & 9.08  & 11.58 & 15.89 \\
1M   & 27034.70 & 215.51 & 207.76 & 125.45 & 18.10 & 23.09 & 31.72 \\
\bottomrule
\end{tabular}
\end{table}

Section~\ref{subsec:systems-results} measures that contract directly.  On full-layers,
the operator stays close to Window latency and well below dense SDPA as context grows, while
Dense SDPA remains lower-memory in our sweep.  The systems claim is therefore measured latency
scaling, not universal memory superiority.  Query-tiled multi-million-token execution and
selected refinement remain systems extensions rather than required parts of the main quality
story.

\clearpage

\bibliographystyle{plainnat}
\bibliography{references}

\newpage

\section*{NeurIPS Paper Checklist}

\begin{enumerate}

\item {\bf Claims}
    \item[] Question: Do the main claims made in the abstract and introduction accurately reflect the paper's contributions and scope?
    \item[] Answer: \answerYes{} %
    \item[] Justification: Contributions are summarized in the abstract and are explicitly stated in a bulleted list. Assumptions and contribution claims are present in all sections of the paper (reg. comparison to existing methods, etc).
    \item[] Guidelines:
    \begin{itemize}
        \item The answer \answerNA{} means that the abstract and introduction do not include the claims made in the paper.
        \item The abstract and/or introduction should clearly state the claims made, including the contributions made in the paper and important assumptions and limitations. A \answerNo{} or \answerNA{} answer to this question will not be perceived well by the reviewers.
        \item The claims made should match theoretical and experimental results, and reflect how much the results can be expected to generalize to other settings.
        \item It is fine to include aspirational goals as motivation as long as it is clear that these goals are not attained by the paper.
    \end{itemize}

\item {\bf Limitations}
    \item[] Question: Does the paper discuss the limitations of the work performed by the authors?
    \item[] Answer: \answerYes{} %
    \item[] Justification: We accurately represent the claims throughout the paper and explicitly lay out what we are not claiming (unproven claims at this time)
    \item[] Guidelines:
    \begin{itemize}
        \item The answer \answerNA{} means that the paper has no limitation while the answer \answerNo{} means that the paper has limitations, but those are not discussed in the paper.
        \item The authors are encouraged to create a separate ``Limitations'' section in their paper.
        \item The paper should point out any strong assumptions and how robust the results are to violations of these assumptions (e.g., independence assumptions, noiseless settings, model well-specification, asymptotic approximations only holding locally). The authors should reflect on how these assumptions might be violated in practice and what the implications would be.
        \item The authors should reflect on the scope of the claims made, e.g., if the approach was only tested on a few datasets or with a few runs. In general, empirical results often depend on implicit assumptions, which should be articulated.
        \item The authors should reflect on the factors that influence the performance of the approach. For example, a facial recognition algorithm may perform poorly when image resolution is low or images are taken in low lighting. Or a speech-to-text system might not be used reliably to provide closed captions for online lectures because it fails to handle technical jargon.
        \item The authors should discuss the computational efficiency of the proposed algorithms and how they scale with dataset size.
        \item If applicable, the authors should discuss possible limitations of their approach to address problems of privacy and fairness.
        \item While the authors might fear that complete honesty about limitations might be used by reviewers as grounds for rejection, a worse outcome might be that reviewers discover limitations that aren't acknowledged in the paper. The authors should use their best judgment and recognize that individual actions in favor of transparency play an important role in developing norms that preserve the integrity of the community. Reviewers will be specifically instructed to not penalize honesty concerning limitations.
    \end{itemize}

\item {\bf Theory assumptions and proofs}
    \item[] Question: For each theoretical result, does the paper provide the full set of assumptions and a complete (and correct) proof?
    \item[] Answer: \answerYes{} %
    \item[] Justification: Proof of claims are included in the main text (or proof sketches). Full proofs are present in the appendix.
    \item[] Guidelines:
    \begin{itemize}
        \item The answer \answerNA{} means that the paper does not include theoretical results.
        \item All the theorems, formulas, and proofs in the paper should be numbered and cross-referenced.
        \item All assumptions should be clearly stated or referenced in the statement of any theorems.
        \item The proofs can either appear in the main paper or the supplemental material, but if they appear in the supplemental material, the authors are encouraged to provide a short proof sketch to provide intuition.
        \item Inversely, any informal proof provided in the core of the paper should be complemented by formal proofs provided in appendix or supplemental material.
        \item Theorems and Lemmas that the proof relies upon should be properly referenced.
    \end{itemize}

    \item {\bf Experimental result reproducibility}
    \item[] Question: Does the paper fully disclose all the information needed to reproduce the main experimental results of the paper to the extent that it affects the main claims and/or conclusions of the paper (regardless of whether the code and data are provided or not)?
    \item[] Answer: \answerYes{} %
    \item[] Justification: Full details of experiments are present in the appendix. Code will be made available after publication.
    \item[] Guidelines:
    \begin{itemize}
        \item The answer \answerNA{} means that the paper does not include experiments.
        \item If the paper includes experiments, a \answerNo{} answer to this question will not be perceived well by the reviewers: Making the paper reproducible is important, regardless of whether the code and data are provided or not.
        \item If the contribution is a dataset and\slash or model, the authors should describe the steps taken to make their results reproducible or verifiable.
        \item Depending on the contribution, reproducibility can be accomplished in various ways. For example, if the contribution is a novel architecture, describing the architecture fully might suffice, or if the contribution is a specific model and empirical evaluation, it may be necessary to either make it possible for others to replicate the model with the same dataset, or provide access to the model. In general. releasing code and data is often one good way to accomplish this, but reproducibility can also be provided via detailed instructions for how to replicate the results, access to a hosted model (e.g., in the case of a large language model), releasing of a model checkpoint, or other means that are appropriate to the research performed.
        \item While NeurIPS does not require releasing code, the conference does require all submissions to provide some reasonable avenue for reproducibility, which may depend on the nature of the contribution. For example
        \begin{enumerate}
            \item If the contribution is primarily a new algorithm, the paper should make it clear how to reproduce that algorithm.
            \item If the contribution is primarily a new model architecture, the paper should describe the architecture clearly and fully.
            \item If the contribution is a new model (e.g., a large language model), then there should either be a way to access this model for reproducing the results or a way to reproduce the model (e.g., with an open-source dataset or instructions for how to construct the dataset).
            \item We recognize that reproducibility may be tricky in some cases, in which case authors are welcome to describe the particular way they provide for reproducibility. In the case of closed-source models, it may be that access to the model is limited in some way (e.g., to registered users), but it should be possible for other researchers to have some path to reproducing or verifying the results.
        \end{enumerate}
    \end{itemize}

\item {\bf Open access to data and code}
    \item[] Question: Does the paper provide open access to the data and code, with sufficient instructions to faithfully reproduce the main experimental results, as described in supplemental material?
    \item[] Answer: \answerNo{} %
    \item[] Justification: Data is already public. Code will be released with the final publication.
    \item[] Guidelines:
    \begin{itemize}
        \item The answer \answerNA{} means that paper does not include experiments requiring code.
        \item Please see the NeurIPS code and data submission guidelines (\url{https://neurips.cc/public/guides/CodeSubmissionPolicy}) for more details.
        \item While we encourage the release of code and data, we understand that this might not be possible, so \answerNo{} is an acceptable answer. Papers cannot be rejected simply for not including code, unless this is central to the contribution (e.g., for a new open-source benchmark).
        \item The instructions should contain the exact command and environment needed to run to reproduce the results. See the NeurIPS code and data submission guidelines (\url{https://neurips.cc/public/guides/CodeSubmissionPolicy}) for more details.
        \item The authors should provide instructions on data access and preparation, including how to access the raw data, preprocessed data, intermediate data, and generated data, etc.
        \item The authors should provide scripts to reproduce all experimental results for the new proposed method and baselines. If only a subset of experiments are reproducible, they should state which ones are omitted from the script and why.
        \item At submission time, to preserve anonymity, the authors should release anonymized versions (if applicable).
        \item Providing as much information as possible in supplemental material (appended to the paper) is recommended, but including URLs to data and code is permitted.
    \end{itemize}

\item {\bf Experimental setting/details}
    \item[] Question: Does the paper specify all the training and test details (e.g., data splits, hyperparameters, how they were chosen, type of optimizer) necessary to understand the results?
    \item[] Answer: \answerYes{} %
    \item[] Justification: Full details are included in the appendices.
    \item[] Guidelines:
    \begin{itemize}
        \item The answer \answerNA{} means that the paper does not include experiments.
        \item The experimental setting should be presented in the core of the paper to a level of detail that is necessary to appreciate the results and make sense of them.
        \item The full details can be provided either with the code, in appendix, or as supplemental material.
    \end{itemize}

\item {\bf Experiment statistical significance}
    \item[] Question: Does the paper report error bars suitably and correctly defined or other appropriate information about the statistical significance of the experiments?
    \item[] Answer: \answerYes{} %
    \item[] Justification: We include error bars and accuracy information in our tables and figures, where applicable and practical (given compute constraints)
    \item[] Guidelines:
    \begin{itemize}
        \item The answer \answerNA{} means that the paper does not include experiments.
        \item The authors should answer \answerYes{} if the results are accompanied by error bars, confidence intervals, or statistical significance tests, at least for the experiments that support the main claims of the paper.
        \item The factors of variability that the error bars are capturing should be clearly stated (for example, train/test split, initialization, random drawing of some parameter, or overall run with given experimental conditions).
        \item The method for calculating the error bars should be explained (closed form formula, call to a library function, bootstrap, etc.)
        \item The assumptions made should be given (e.g., Normally distributed errors).
        \item It should be clear whether the error bar is the standard deviation or the standard error of the mean.
        \item It is OK to report 1-sigma error bars, but one should state it. The authors should preferably report a 2-sigma error bar than state that they have a 96\% CI, if the hypothesis of Normality of errors is not verified.
        \item For asymmetric distributions, the authors should be careful not to show in tables or figures symmetric error bars that would yield results that are out of range (e.g., negative error rates).
        \item If error bars are reported in tables or plots, the authors should explain in the text how they were calculated and reference the corresponding figures or tables in the text.
    \end{itemize}

\item {\bf Experiments compute resources}
    \item[] Question: For each experiment, does the paper provide sufficient information on the computer resources (type of compute workers, memory, time of execution) needed to reproduce the experiments?
    \item[] Answer: \answerYes{} %
    \item[] Justification: Yes. We regularly refer to the hardware used (NVIDIA H200) for experiments. Time of execution for inference was also included as a quantitative metric, and we mention other training details in the main body of the paper and appendices.
    \item[] Guidelines:
    \begin{itemize}
        \item The answer \answerNA{} means that the paper does not include experiments.
        \item The paper should indicate the type of compute workers CPU or GPU, internal cluster, or cloud provider, including relevant memory and storage.
        \item The paper should provide the amount of compute required for each of the individual experimental runs as well as estimate the total compute.
        \item The paper should disclose whether the full research project required more compute than the experiments reported in the paper (e.g., preliminary or failed experiments that didn't make it into the paper).
    \end{itemize}

\item {\bf Code of ethics}
    \item[] Question: Does the research conducted in the paper conform, in every respect, with the NeurIPS Code of Ethics \url{https://neurips.cc/public/EthicsGuidelines}?
    \item[] Answer: \answerYes{} %
    \item[] Justification: We have reviewed and agree to the NeurIPS Code of Ethics.
    \item[] Guidelines:
    \begin{itemize}
        \item The answer \answerNA{} means that the authors have not reviewed the NeurIPS Code of Ethics.
        \item If the authors answer \answerNo, they should explain the special circumstances that require a deviation from the Code of Ethics.
        \item The authors should make sure to preserve anonymity (e.g., if there is a special consideration due to laws or regulations in their jurisdiction).
    \end{itemize}

\item {\bf Broader impacts}
    \item[] Question: Does the paper discuss both potential positive societal impacts and negative societal impacts of the work performed?
    \item[] Answer: \answerYes{} %
    \item[] Justification: Impact is justified under the introduction and conclusion. There are no negative societal impacts of the work performed.
    \item[] Guidelines:
    \begin{itemize}
        \item The answer \answerNA{} means that there is no societal impact of the work performed.
        \item If the authors answer \answerNA{} or \answerNo, they should explain why their work has no societal impact or why the paper does not address societal impact.
        \item Examples of negative societal impacts include potential malicious or unintended uses (e.g., disinformation, generating fake profiles, surveillance), fairness considerations (e.g., deployment of technologies that could make decisions that unfairly impact specific groups), privacy considerations, and security considerations.
        \item The conference expects that many papers will be foundational research and not tied to particular applications, let alone deployments. However, if there is a direct path to any negative applications, the authors should point it out. For example, it is legitimate to point out that an improvement in the quality of generative models could be used to generate Deepfakes for disinformation. On the other hand, it is not needed to point out that a generic algorithm for optimizing neural networks could enable people to train models that generate Deepfakes faster.
        \item The authors should consider possible harms that could arise when the technology is being used as intended and functioning correctly, harms that could arise when the technology is being used as intended but gives incorrect results, and harms following from (intentional or unintentional) misuse of the technology.
        \item If there are negative societal impacts, the authors could also discuss possible mitigation strategies (e.g., gated release of models, providing defenses in addition to attacks, mechanisms for monitoring misuse, mechanisms to monitor how a system learns from feedback over time, improving the efficiency and accessibility of ML).
    \end{itemize}

\item {\bf Safeguards}
    \item[] Question: Does the paper describe safeguards that have been put in place for responsible release of data or models that have a high risk for misuse (e.g., pre-trained language models, image generators, or scraped datasets)?
    \item[] Answer: \answerNA{} %
    \item[] Justification: This paper poses no such risks.
    \item[] Guidelines:
    \begin{itemize}
        \item The answer \answerNA{} means that the paper poses no such risks.
        \item Released models that have a high risk for misuse or dual-use should be released with necessary safeguards to allow for controlled use of the model, for example by requiring that users adhere to usage guidelines or restrictions to access the model or implementing safety filters.
        \item Datasets that have been scraped from the Internet could pose safety risks. The authors should describe how they avoided releasing unsafe images.
        \item We recognize that providing effective safeguards is challenging, and many papers do not require this, but we encourage authors to take this into account and make a best faith effort.
    \end{itemize}

\item {\bf Licenses for existing assets}
    \item[] Question: Are the creators or original owners of assets (e.g., code, data, models), used in the paper, properly credited and are the license and terms of use explicitly mentioned and properly respected?
    \item[] Answer: \answerYes{} %
    \item[] Justification: All datasets which were used are cited extensively.
    \item[] Guidelines:
    \begin{itemize}
        \item The answer \answerNA{} means that the paper does not use existing assets.
        \item The authors should cite the original paper that produced the code package or dataset.
        \item The authors should state which version of the asset is used and, if possible, include a URL.
        \item The name of the license (e.g., CC-BY 4.0) should be included for each asset.
        \item For scraped data from a particular source (e.g., website), the copyright and terms of service of that source should be provided.
        \item If assets are released, the license, copyright information, and terms of use in the package should be provided. For popular datasets, \url{paperswithcode.com/datasets} has curated licenses for some datasets. Their licensing guide can help determine the license of a dataset.
        \item For existing datasets that are re-packaged, both the original license and the license of the derived asset (if it has changed) should be provided.
        \item If this information is not available online, the authors are encouraged to reach out to the asset's creators.
    \end{itemize}

\item {\bf New assets}
    \item[] Question: Are new assets introduced in the paper well documented and is the documentation provided alongside the assets?
    \item[] Answer: \answerNA{} %
    \item[] Justification: We are not producing any assets at this time. We will include code with the final publication.
    \item[] Guidelines:
    \begin{itemize}
        \item The answer \answerNA{} means that the paper does not release new assets.
        \item Researchers should communicate the details of the dataset\slash code\slash model as part of their submissions via structured templates. This includes details about training, license, limitations, etc.
        \item The paper should discuss whether and how consent was obtained from people whose asset is used.
        \item At submission time, remember to anonymize your assets (if applicable). You can either create an anonymized URL or include an anonymized zip file.
    \end{itemize}

\item {\bf Crowdsourcing and research with human subjects}
    \item[] Question: For crowdsourcing experiments and research with human subjects, does the paper include the full text of instructions given to participants and screenshots, if applicable, as well as details about compensation (if any)?
    \item[] Answer: \answerNA{} %
    \item[] Justification: No human subjects were used
    \item[] Guidelines:
    \begin{itemize}
        \item The answer \answerNA{} means that the paper does not involve crowdsourcing nor research with human subjects.
        \item Including this information in the supplemental material is fine, but if the main contribution of the paper involves human subjects, then as much detail as possible should be included in the main paper.
        \item According to the NeurIPS Code of Ethics, workers involved in data collection, curation, or other labor should be paid at least the minimum wage in the country of the data collector.
    \end{itemize}

\item {\bf Institutional review board (IRB) approvals or equivalent for research with human subjects}
    \item[] Question: Does the paper describe potential risks incurred by study participants, whether such risks were disclosed to the subjects, and whether Institutional Review Board (IRB) approvals (or an equivalent approval/review based on the requirements of your country or institution) were obtained?
    \item[] Answer: \answerNA{} %
    \item[] Justification: IRB approval was not required for this work
    \item[] Guidelines:
    \begin{itemize}
        \item The answer \answerNA{} means that the paper does not involve crowdsourcing nor research with human subjects.
        \item Depending on the country in which research is conducted, IRB approval (or equivalent) may be required for any human subjects research. If you obtained IRB approval, you should clearly state this in the paper.
        \item We recognize that the procedures for this may vary significantly between institutions and locations, and we expect authors to adhere to the NeurIPS Code of Ethics and the guidelines for their institution.
        \item For initial submissions, do not include any information that would break anonymity (if applicable), such as the institution conducting the review.
    \end{itemize}

\item {\bf Declaration of LLM usage}
    \item[] Question: Does the paper describe the usage of LLMs if it is an important, original, or non-standard component of the core methods in this research? Note that if the LLM is used only for writing, editing, or formatting purposes and does \emph{not} impact the core methodology, scientific rigor, or originality of the research, declaration is not required.
    \item[] Answer: \answerNo{} %
    \item[] Justification: LLMs were not used for any of the core methods of this work.
    \item[] Guidelines:
    \begin{itemize}
        \item The answer \answerNA{} means that the core method development in this research does not involve LLMs as any important, original, or non-standard components.
        \item Please refer to our LLM policy in the NeurIPS handbook for what should or should not be described.
    \end{itemize}

\end{enumerate}

\end{document}